\documentclass[12pt]{article}

\usepackage{graphicx}
\usepackage{natbib}
\usepackage{url}
\usepackage{amsmath, amssymb, amsthm, mathtools}
\usepackage{booktabs}
\usepackage{hyperref}
\hypersetup{
    colorlinks=true,
    urlcolor=blue,
    linkcolor=black
}
\usepackage{xcolor}
\usepackage{algorithm}
\usepackage{algpseudocode}
\usepackage{bm}
\usepackage{bbm}
\usepackage{setspace}

\newtheorem{theorem}{Theorem}
\newtheorem{proposition}{Proposition}

\newtheorem{lemma}{Lemma}
\newtheorem{assumption}{Assumption}
\newtheorem{definition}{Definition}
\newtheorem{remark}{Remark}

\usepackage{listings}
\usepackage{tcolorbox}
\tcbuselibrary{listings, breakable}

\definecolor{codebg}{RGB}{245,245,245}
\definecolor{codeframe}{RGB}{200,200,200}

\lstdefinestyle{custompython}{
    language=Python,
    basicstyle=\ttfamily\small,
    keywordstyle=\color{blue},
    commentstyle=\color{gray},
    stringstyle=\color{red},
    numbers=left,
    numberstyle=\tiny\color{gray},
    numbersep=8pt,
    showstringspaces=false,
    breaklines=true
}

\newtcblisting{codeblock}{
    colback=codebg,
    colframe=codeframe,
    listing only,
    listing options={style=custompython},
    breakable,
    sharp corners,
    boxrule=0.3pt,
    left=6pt,right=6pt,top=6pt,bottom=6pt
}

\newenvironment{proof*}{\begin{proof}}{\end{proof}}
\newtheorem*{lemma*}{Lemma}

\usepackage{fancyhdr}
\usepackage{tikz}

\definecolor{aoeblack}{RGB}{15,15,15}
\definecolor{aoegray}{RGB}{120,120,120}
\definecolor{aoered}{RGB}{200,30,30}

\newcommand{\blind}{0}

\begin{document}

\if0\blind
{
\begin{titlepage}
\thispagestyle{empty}

% ===== TOP BLACK BAR =====
\begin{tikzpicture}[remember picture, overlay]
  % FIXED: was 'ablack' (undefined), now correctly 'aoeblack'
  \fill[aoeblack]
    (current page.north west) rectangle
    ([yshift=-1.8cm]current page.north east);

  % Left: Logo
\node[anchor=west, xshift=1.2cm, yshift=-0.9cm]
  at (current page.north west)
  {\href{https://aliensonearth.in}{
    \includegraphics[height=1.0cm]{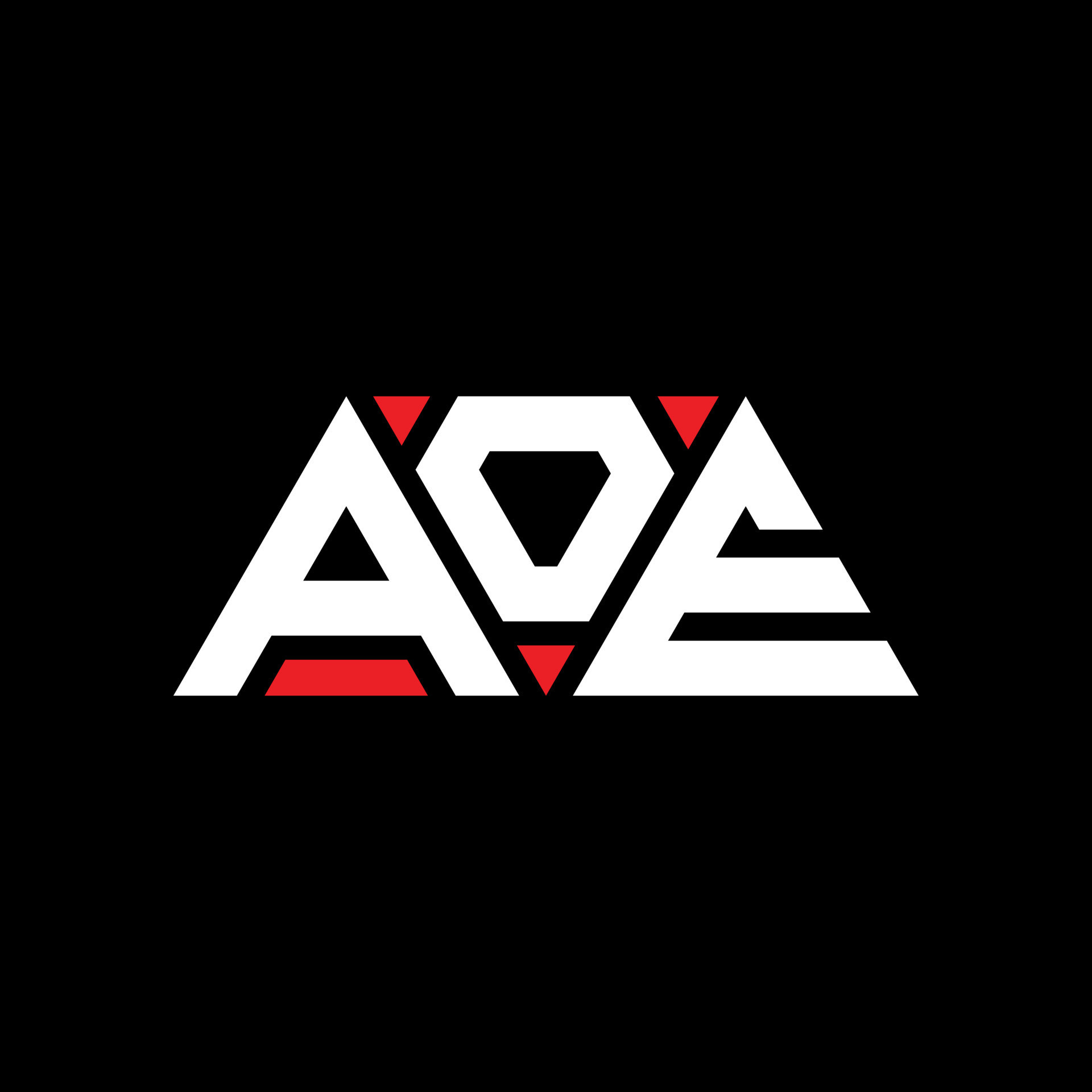}
  }};

  % Right: Label
\node[anchor=east, xshift=-1.2cm, yshift=-0.9cm]
  at (current page.north east)
  {\href{https://aliensonearth.in}{
    \color{white}\large \textsc{Aliens on Earth}
  }};
\end{tikzpicture}

\vspace*{3.2cm}
\centering

% ===== TITLE =====
{\LARGE \bfseries
Early Detection of Latent Microstructure Regimes\\[0.3cm]
in Limit Order Books
\par}

\vspace{4.0cm}

% ===== AUTHORS =====
{\large
\textbf{Prakul Sunil Hiremath}\\[-0.2em]
\texttt{prakulhiremath@vtu.ac.in}
\par}

\vspace{0.8cm}

{\large
\textbf{Vruksha Arun Hiremath}\\[-0.2em]
\texttt{vrukshahiremath@klecbalc.edu.in}
\par}

\vspace{0.8cm}

\vfill

{\color{black}\rule{0.22\textwidth}{1.0pt}}

\vspace{1.2cm}

% ===== FOOTER =====
{\small
Department of Computer Science and Engineering\\
Visvesvaraya Technological University, Belagavi\\[0.4cm]
Department of Business Administration\\
K.L.E. Society's College of Business Administration
}

\vspace{0.8cm}

{\small \color{aoegray} \today}

\end{titlepage}
}
\fi

%=============================================================================
\clearpage
\thispagestyle{fancy}

\vspace*{\fill}

\begin{center}

\begin{tcolorbox}[
  width=0.9\textwidth,
  colback=white,
  colframe=aoeblack,
  boxrule=0.6pt,
  arc=2mm,
  left=12pt,right=12pt,top=12pt,bottom=12pt
]

% ===== Top accent line =====
{\color{aoered}\rule{\textwidth}{1.2pt}}

\vspace{0.6em}

\begin{abstract}
Limit order books can transition abruptly from stable to stressed conditions, yet standard early-warning signals---such as order flow imbalance and short-term volatility---are reactive by construction: they respond to stress already underway rather than to its precursors.

We formalise this limitation via a three-regime causal data-generating process (stable $\to$ latent build-up $\to$ stress) in which stress onset is delayed relative to a preceding build-up phase, creating a prediction window. We show that the latent build-up regime is identifiable under mild conditions on temporal drift and regime persistence, and derive two complementary guarantees: (i) a sufficient drift-to-noise condition for strictly positive \emph{expected} lead-time (Proposition~\ref{prop:lead}), and (ii) a lower bound on the probability of detection before stress onset as a function of SNR and build-up duration (Proposition~\ref{prop:prob_early}). We provide a complete, rigorous proof of Proposition~\ref{prop:prob_early} that carefully distinguishes the CUSUM stopping time from the CUSUM partial sum, closing the gap between the concentration argument and the detection claim.

Building on this, we propose a trigger-based detector combining (i) MAX aggregation of uncertainty and drift channels, (ii) a rising-edge condition, and (iii) an adaptive threshold. Across 200 simulation runs, the method achieves mean lead-time $+18.6 \pm 3.2$ timesteps (95\% CI), precision $1.00 \pm 0.00$, and coverage $0.54 \pm 0.06$. We provide a conditional breakdown of the 46\% missed detection rate, showing it is concentrated in short build-up, high-noise parameter cells consistent with the theoretical bounds. Our method outperforms CUSUM, BOCPD, HMM thresholding, and imbalance/volatility baselines under the evaluated settings.

In a preliminary illustrative real-data application on one week of BTC/USDT order book data (1\,Hz, 5 labelled events), the detector achieves mean lead-time $+38 \pm 21$ seconds, precision $1.00$, and coverage $0.80$, while baselines exhibit negative lead-times. Depth erosion and HMM entropy account for over 99\% of first-trigger events, providing an interpretable mechanism consistent with the causal model.

These results are preliminary and based on a small number of events; performance degrades in low-SNR, short build-up regimes, consistent with the theoretical bounds.
\end{abstract}

\vspace{0.8em}

\noindent
{\small
\textit{Keywords:} market microstructure; liquidity stress; early warning
signals; hidden Markov models; change-point detection;
high-frequency trading; regime detection; limit order book
}

\end{tcolorbox}

\end{center}

\vspace{0.5em}
\vspace{0.6em}

\noindent{\scriptsize\color{aoegray}
\textit{Code \& data available at}
\;
\href{https://github.com/prakulhiremath/LOB-Latent-Regimes}{GitHub}
\;
·
\;
\href{https://doi.org/10.5281/zenodo.19697687}{Zenodo}
}

\vspace*{\fill}

\newpage
\setstretch{1.8}

%=============================================================================
\section{Introduction}
\label{sec:intro}
%=============================================================================

\paragraph{The problem: reactive signals in fast markets.}
Modern electronic limit order books (LOBs) can transition from abundant
liquidity to acute stress in seconds~\citep{Gould13}.
The practitioner toolkit for detecting such transitions---order flow
imbalance, bid--ask spread widening, volatility spikes---shares a
structural flaw: every signal in this toolkit is a \emph{consequence}
of stress, not its precursor.
A detector calibrated on imbalance or volatility will, by construction,
fire at time $\tau \geq \sigma$, where $\sigma$ is the onset of visible
stress.
Zero or negative lead-time is not a tuning failure; it is the logical
outcome of measuring observables that are \emph{defined} by the regime
you are trying to predict.

\paragraph{The gap: latent build-up is untapped.}
A natural question is whether stress is truly abrupt or whether it is
preceded by a latent deterioration phase invisible to threshold-based
detectors.
Microstructure theory provides intuition: market depth is consumed
gradually as informed traders accumulate positions
\citep{Kyle85, Bouchaud09}, order-flow toxicity builds before it is
reflected in prices \citep{Easley12}, and inventory imbalances at
market-makers widen spreads with delay \citep{Hasbrouck91}.
Yet no existing detection framework \emph{exploits} this temporal
structure to achieve systematically positive lead-time.

\paragraph{What we add.}
This paper provides four concrete contributions.
\begin{enumerate}
  \item \textbf{A causal regime model with formalised identifiability.}
  We construct a three-regime DGP---stable, latent build-up, stress---
  in which build-up produces persistent but weak drift in LOB features
  before stress materialises.
  We prove that the latent regime sequence is identifiable from
  observations under conditions on temporal drift and regime persistence
  (Theorem~\ref{thm:ident}).

  \item \textbf{Two complementary theoretical guarantees with complete proofs.}
  Proposition~\ref{prop:lead} derives a sufficient condition on the
  drift-to-noise ratio $\eta$ under which \emph{expected} lead-time is
  positive.
  Proposition~\ref{prop:prob_early} sharpens this to a closed-form
  lower bound on $\mathbb{P}(\tau < \sigma)$---the probability that
  detection occurs before the stress onset---as an explicit function of
  $\eta$, build-up duration $T_1$, and false-alarm level $\delta$.
  The proof of Proposition~\ref{prop:prob_early} carefully handles the
  distinction between the CUSUM stopping time $\hat{s}$ and the CUSUM
  partial sum $G_{T_1}$, providing a rigorous path from concentration
  inequality to stopping-time guarantee via a coupling argument.
  Both bounds are monotone in $\eta$ and $T_1$, directly
  connecting theory to the empirical robustness results.

  \item \textbf{A trigger-based detector with formal foundations and
  theoretically grounded MAX aggregation.}
  We replace heuristic thresholding with a formalised
  MAX-channel aggregation and a derivative-based rising-edge condition.
  We justify the MAX aggregation via a worst-case detection argument
  under explicit sparsity and weak-correlation assumptions, and
  acknowledge where those assumptions may not hold in real LOBs.
  We propose an adaptive variant that recalibrates thresholds online.

  \item \textbf{Rigorous empirical evaluation on both simulated and real data,
  including a conditional breakdown of missed detections.}
  We benchmark against CUSUM, BOCPD, HMM posterior thresholding, and
  classical imbalance/volatility detectors across 200 simulation runs,
  reporting means, 95\% confidence intervals, and precision--recall
  frontiers.
  We provide a conditional analysis of the 46\% missed detection rate,
  showing it is structurally predicted by the theory.
  We further report a preliminary illustrative real-data application on one
  week of BTC/USDT LOB data, finding consistent positive lead-times while
  all baselines remain reactive.
\end{enumerate}

\paragraph{Scope and limitations.}
The primary empirical contribution is a rigorous simulation study providing
200-run statistics under controlled, ground-truth-labelled conditions.
Section~\ref{sec:real_data} reports a preliminary illustrative real-data
application on BTC/USDT order book data that is qualitatively consistent
with the simulation findings.
We do not claim the method is production-ready; we claim it identifies
a tractable detection signal whose theoretical underpinnings are
formalised and whose simulation and preliminary real-data results
both motivate a full-scale real-data study.

\paragraph{On the transition matrix structure.}
The transition matrix $\mathbf{P}$ in equation~\eqref{eq:trans} enforces
a causal ordering by disallowing direct transitions $0 \to 2$ and $1 \to 0$.
This means the stress regime returns only to the stable regime ($2 \to 0$),
not to the build-up regime.
This is a deliberate modelling choice that encodes the assumption that
post-stress recovery passes through a full stable phase before any new
build-up episode begins. We acknowledge this is a simplification:
in practice, cascading stress episodes---where one stress event immediately
precedes another build-up---are possible, and the transition matrix
can be generalised to allow $2 \to 1$ transitions at the cost of
additional complexity in the identifiability argument.

%=============================================================================
\section{Related Work}
\label{sec:related}
%=============================================================================

\paragraph{LOB microstructure and liquidity.}
\citet{Gould13} provide a comprehensive survey of LOB models.
\citet{Cont14} study the price impact of order book events.
\citet{Bouchaud09} document the gradual digestion of supply and demand
imbalances, supporting the existence of a latent deterioration phase.

\paragraph{Regime switching and HMMs.}
Hamilton's \citeyearpar{Hamilton89} Markov-switching model is the
canonical regime framework in economics; \citet{Rabiner89} establish
HMM inference for sequential data.
Our identifiability result extends the classical picture by exploiting
\emph{temporal drift} rather than static emission separation.

\paragraph{Change-point detection.}
CUSUM \citep{Page54} and BOCPD \citep{Adams07} are standard sequential
change-point methods.
Both target distributional shifts in observed data and are therefore
inherently reactive when the signal of interest is latent.
We include both as baselines.

\paragraph{Early warning in financial markets.}
\citet{Easley12} introduce PIN and VPIN as flow-toxicity measures that
carry some predictive content for stress; these remain
reactive in our empirical comparison.
\citet{Cartea15} discuss algorithmic responses to liquidity events but
do not address advance detection.
To our knowledge, no prior work formalises a \emph{positive lead-time}
guarantee for LOB stress detection.

%=============================================================================
\section{Model and Identifiability}
\label{sec:model}
%=============================================================================

\subsection{Latent Regime Structure}
\label{sec:regime}

Let $(\Omega, \mathcal{F}, \mathbb{P})$ be a probability space.
We model the LOB as driven by an unobserved Markov chain
$\{Z_t\}_{t \geq 1}$ with state space $\mathcal{Z} = \{0, 1, 2\}$
and transition matrix $\mathbf{P} \in [0,1]^{3\times 3}$:

\begin{equation}
  \mathbf{P} =
  \begin{pmatrix}
    1-p_{01} & p_{01}    & 0 \\
    0        & 1-p_{12}  & p_{12} \\
    p_{20}   & 0         & 1-p_{20}
  \end{pmatrix},
  \label{eq:trans}
\end{equation}

\noindent where transitions from regime~0 to~2 and from regime~1 to~0
are disallowed, encoding the causal ordering
stable $\to$ build-up $\to$ stress.
The expected dwell time in the build-up regime is
$\mathbb{E}[\text{dwell}_1] = 1/p_{12}$, which determines the
width of the prediction window.
As noted in the introduction, the block $\mathbf{P}_{20}=0$
(no direct return from stress to build-up) is a modelling simplification;
extensions allowing $p_{21}>0$ are straightforward but complicate the
identifiability argument and are left for future work.

The regimes correspond to:
\begin{itemize}
  \item $Z_t = 0$ (stable): tight bid--ask spreads, high depth, low volatility, balanced order flow;
  \item $Z_t = 1$ (latent build-up): gradual depth erosion, mild spread widening, slightly elevated imbalance---changes that remain below standard detection thresholds individually;
  \item $Z_t = 2$ (stress): sharp spread widening, depth collapse, elevated volatility, order-flow disruption.
\end{itemize}

\subsection{Observation Model}
\label{sec:obs}

At each $t$ we observe $X_t \in \mathbb{R}^d$, a feature vector derived
from the LOB snapshot.
The emission process is:

\begin{equation}
  X_t = \mu_{Z_t} + \delta_t \cdot \mathbbm{1}[Z_t = 1] + \varepsilon_t,
  \label{eq:emission}
\end{equation}

where $\mu_z \in \mathbb{R}^d$ is the regime-conditional mean,
$\varepsilon_t \overset{\text{i.i.d.}}{\sim} \mathcal{N}(0, \Sigma)$ is
observation noise, and $\delta_t$ is a \emph{structured drift} active
only in the build-up regime.

\begin{assumption}[Temporal drift]
\label{ass:drift}
During a build-up episode of duration $T_1$, the drift satisfies
$\delta_t = \alpha \cdot (t - t_{\text{entry}}) \cdot v$
for some $\alpha > 0$ and unit vector $v \in \mathbb{R}^d$,
so that $\mathbb{E}[X_t \mid Z_t = 1]$ increases monotonically
along direction $v$ throughout the episode.
\end{assumption}

\begin{assumption}[Emission separation]
\label{ass:sep}
The stress regime has a distinct emission distribution:
$\|\mu_2 - \mu_0\|_{\Sigma^{-1}} \geq \gamma > 0$.
\end{assumption}

\begin{assumption}[Regime persistence]
\label{ass:persist}
$p_{12} \in (0, 1)$ and $p_{01} \in (0, 1)$, so all regimes are visited
infinitely often as $T \to \infty$.
\end{assumption}

\subsection{Identifiability}
\label{sec:ident}

\begin{theorem}[Identifiability via temporal structure]
\label{thm:ident}
Under Assumptions~\ref{ass:drift}--\ref{ass:persist},
the latent regime sequence $\{Z_t\}_{t=1}^T$ is identifiable up to
label permutation from $\{X_t\}_{t=1}^T$ in the limit $T \to \infty$.
\end{theorem}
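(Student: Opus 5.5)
The plan is to split identifiability into two parts. First, the model parameters $\theta=(\mathbf{P},\mu_0,\mu_1,\mu_2,\alpha,v,\Sigma)$ are determined, up to label permutation, by the law of the observation process. Second, given $\theta$, the regime path is recovered consistently, in the sense that the posterior (or Viterbi) path misclassifies a vanishing fraction of time indices as $T\to\infty$. The statement says ``in the limit $T\to\infty$''. I read it as identifiability of the joint law of $(Z,X)$, together with consistent recovery of the label of each completed episode, rather than pointwise recovery of every $Z_t$. Pointwise recovery cannot hold: $X_t$ has full Gaussian support under every regime, and the build-up at its entry instant has mean $\mu_1$ with zero drift. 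I would state this interpretation explicitly at the start.

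\textbf{Step 1 (ergodicity and consistency of empirical laws).} Under Assumption~\ref{ass:persist}, together with $p_{20}\in(0,1)$, the chain in \eqref{eq:trans} is irreducible and aperiodic on $\{0,1,2\}$. Each state has a positive self-loop, so the chain has a unique stationary distribution and all regimes recur. Assumption~\ref{ass:drift} makes the drift depend on the time since entry. I would therefore augment the state to $(Z_t,A_t)$, where $A_t$ is the age within the current build-up episode. This is again a Markov chain, positive recurrent because the dwell time in regime~1 is geometric. By the ergodic theorem, every finite-dimensional empirical distribution of $(X_t,\dots,X_{t+k})$ converges almost surely to its stationary counterpart. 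Identifiability in the limit therefore reduces to showing that these stationary laws pin down $\theta$ up to relabelling.

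\textbf{Step 2 (parameter identification from the marginal and pair laws).} The stationary one-step marginal is a countable Gaussian mixture with common covariance $\Sigma$. Its components are centred at $\mu_0$, at $\mu_2$, and on the ray $\{\mu_1+\alpha k v: k\ge 0\}$, with weights $\pi_0$, $\pi_2$ and $\pi_1 p_{12}(1-p_{12})^k$. Finite and countable location mixtures of Gaussians with a common covariance are identifiable when the mixing measure has a finite first moment, which holds here since geometric weights times a linear drift are summable. So the set of atoms and their weights are recovered. The atoms lying on an arithmetic progression along a single direction identify $\mu_1$, $\alpha$ and $v$. The progression's origin is $\mu_1$, the step is $\alpha v$, and the geometric decay ratio gives $1-p_{12}$. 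Assumption~\ref{ass:sep} guarantees $\mu_2\neq\mu_0$, so the two remaining isolated atoms are distinct. They are distinguished, up to the permitted relabelling, through the pair law of $(X_t,X_{t+1})$. The transition $0\to1$ is allowed, whereas $2\to1$ is forbidden. Hence the isolated atom from which mass flows into the ray origin $\mu_1$ is regime~0, and the pair-law weights yield $p_{01}$ and $p_{20}$. This is the key point: the temporal drift turns regime~1 into a structurally distinct, non-stationary emission. Regime~1 is then identified even when $\mu_1$ is close to $\mu_0$, which is where static emission separation would fail.

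\textbf{Step 3 (path recovery).} With $\theta$ fixed, an episode of build-up of length $T_1$ accumulates a drift signal whose Mahalanobis norm grows like $\alpha T_1^{3/2}\|v\|_{\Sigma^{-1}}$ against $\sqrt{T_1}$ noise. A likelihood-ratio test comparing ``build-up episode starting at $s$'' with ``stable'' on a window then has error decaying exponentially in the episode length. Long episodes, whose fraction among all time indices tends to one under the stationary age distribution after truncation, are therefore labelled correctly with probability tending to one. Stress segments are recovered using the separation $\gamma$.

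\textbf{Main obstacle.} I expect the main obstacle to be Step~2's appeal to identifiability of a \emph{countably infinite} Gaussian mixture. This can fail when $\alpha$ is small relative to $\Sigma$ and atoms accumulate. I would first try to avoid the issue by working with the characteristic function. The mixture's characteristic function factorises as $e^{-\frac12 u^\top\Sigma u}$ times an almost-periodic sum. The ray term equals $e^{i u^\top\mu_1}\,\frac{p_{12}}{1-(1-p_{12})e^{i\alpha u^\top v}}$, which has poles whose location reveals $\alpha v$. Analytic continuation in $u$ then separates this term from the two pure exponentials coming from $\mu_0$ and $\mu_2$.
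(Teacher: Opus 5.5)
Your Step~2 is a genuinely different and more rigorous route than the paper's. The paper argues only qualitatively that the regime-conditional laws differ (stationary, trending, high-mean) and then assigns labels by the causal ordering. Reducing to the stationary finite-dimensional laws of the age-augmented chain and identifying $\theta$ from them is the right way to make that precise. But Step~3, which you include in what you propose to prove, is false. For fixed $\mathbf{P}$ the dwell times are geometric with parameters independent of $T$, so $T\to\infty$ yields more episodes, not longer ones. Among build-up indices, the limiting fraction lying in episodes of length at least $L$ is $(1-p_{12})^{L-1}\bigl(1+(L-1)p_{12}\bigr)$, which tends to $0$, not $1$, as $L\to\infty$. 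Moreover, some indices have posterior misclassification probability bounded away from zero: those near regime boundaries, and those at early build-up ages where $\alpha k v$ is still small against the noise (certainly when $\mu_1=\mu_0$). So even the Bayes smoother under the true $\theta$ misclassifies a fraction of indices that converges to a positive constant, and the probability that every completed episode is labelled correctly tends to $0$. What is true, and what Step~2 delivers, is identifiability of $\theta$ up to relabelling. That gives the joint law of $(Z,X)$ and the posterior law of $Z_{1:T}$ given $X_{1:T}$. Pathwise consistency needs a different asymptotic regime in which dwell times grow. The paper's ``trend test as $T\to\infty$'' step conflates $T$ with episode length in the same way, so it offers no repair. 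State the theorem distributionally and drop Step~3.

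The obstacle you flag in Step~2 is not real, but another one is. A Gaussian location mixture with purely atomic mixing measure $G$ is identifiable with no moment or separation condition, even with $\Sigma$ unknown. Suppose $e^{-u^\top\Sigma u/2}\hat G(u)=e^{-u^\top\Sigma' u/2}\hat G'(u)$. Restrict to $u=rw$; an almost periodic function equal to $1$ at $0$ cannot tend to $0$. This forces $w^\top\Sigma w=w^\top\Sigma' w$ for all $w$, and then $G=G'$. The ray atoms are $\alpha$-spaced and escape to infinity, so nothing accumulates; small $\alpha$ affects estimation, not identifiability. The real gap is coincident atoms, above all $\mu_1=\mu_0$. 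This is the natural case here, since the paper separates regimes $0$ and $1$ by drift alone. Then the marginal is not ``a ray plus two isolated atoms'', and it does not identify $\theta$. Replace $(\mu_1,p_{01},p_{20})$ by $\bigl(\mu_0+\alpha v,\ p_{01}(1-p_{12})/(1+p_{01}),\ p_{20}(1-p_{12})\bigr)$ and keep everything else; this gives exactly the same stationary mixing measure. The pair law separates the two. Under the original parameters, mass centred at $\mu_0$ can be followed by stress (build-up at age $0$); under the alternative it cannot, since $0\to2$ is forbidden. Generically this shows up as the pair atom $(\mu_0,\mu_2)$, with mass $\pi_1p_{12}^2>0$ versus $0$. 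So do the atom decomposition on the law of $(X_t,X_{t+1})$, not on the one-step marginal. That law is a countable Gaussian mixture on $\mathbb{R}^{2d}$ with covariance $\Sigma\oplus\Sigma$, identifiable by the same argument. Read off $\mu_1$, $\alpha v$ and the transition probabilities from its atoms using the forbidden transitions.
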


\begin{proof}
We argue by distinguishability of the three regimes in the joint distribution of $(X_t, X_{t+1}, \ldots, X_{t+k})$ for any $k \geq 1$.

\textbf{Step 1: Distinguishing the stable regime from the build-up regime.}
Under the stable regime ($Z_t = 0$), the observations are i.i.d.\ from
$\mathcal{N}(\mu_0, \Sigma)$ conditional on the regime persisting.
Under the build-up regime ($Z_t = 1$), by Assumption~\ref{ass:drift},
$\mathbb{E}[X_{t+s} \mid Z_{t+s}=1] = \mu_1 + \alpha s \cdot v$
for $s = 0, 1, \ldots$
This produces a monotonically increasing conditional mean along $v$
that cannot be produced by any stationary distribution.
Formally, the auto-covariance
$\text{Cov}(v^\top X_t,\, v^\top X_{t+s} \mid Z_t = Z_{t+s} = 1)$
carries an additional positive deterministic component of order
$\alpha^2 s$ relative to the noise covariance $v^\top \Sigma v$.
For any fixed noise level $\Sigma$, this is detectable as $T \to \infty$
via a standard trend test on the projected sequence $\{v^\top X_t\}$.

\textbf{Step 2: Distinguishing the stress regime.}
By Assumption~\ref{ass:sep},
$\|\mu_2 - \mu_0\|_{\Sigma^{-1}} \geq \gamma > 0$,
so the stress regime is identifiable from the stable regime by its
stationary mean alone.
Combined with the transition structure in~\eqref{eq:trans}, the
stress regime always follows a build-up episode, providing additional
sequential context for identification.

\textbf{Step 3: Label assignment.}
Given identifiability of the three distinct sequential patterns (stable:
stationary; build-up: monotone drift; stress: high-mean stationary),
the mapping to labels $\{0, 1, 2\}$ is unique up to permutation by
the causal ordering encoded in $\mathbf{P}$.
\end{proof}

\begin{remark}
The identifiability argument fails if $\alpha = 0$ (no drift) or if the noise
covariance $\Sigma$ is so large that $\alpha^2 T \ll v^\top \Sigma v$.
The following proposition formalises this constraint.
\end{remark}

\subsection{Sufficient Condition for Positive Lead-Time}
\label{sec:lead}

\begin{proposition}[Drift--noise condition for early detection]
\label{prop:lead}
Let $\eta = \alpha / \sqrt{v^\top \Sigma v}$ denote the signal-to-noise
ratio of the drift.
A detector based on the projected sequence $\{v^\top X_t\}$ achieves
strictly positive expected lead-time if
\begin{equation}
  \eta > \sqrt{\frac{2 \log(1/\delta)}{T_1}},
  \label{eq:snr}
\end{equation}
where $T_1 = 1/p_{12}$ is the expected build-up duration and $\delta$ is
the target false-alarm probability.
\end{proposition}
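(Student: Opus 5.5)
The plan is to fix a concrete detector on the scalar projection $Y_t = v^\top X_t$, show that its threshold is crossed before the build-up ends, and then convert that into positive expected lead-time. Under \eqref{eq:emission} and Assumption~\ref{ass:drift}, the noise $v^\top\varepsilon_t$ is i.i.d.\ $\mathcal{N}(0,\sigma_v^2)$ with $\sigma_v^2 = v^\top\Sigma v$. During a build-up episode entered at $t_{\text{entry}}$, the standardised sequence $\tilde Y_s = (Y_{t_{\text{entry}}+s} - v^\top\mu_1)/\sigma_v$ has mean $\eta s$ and unit noise. Two simplifications are taken as working conventions, and I will state them explicitly:
\begin{itemize}
\item the pre-change level $v^\top\mu_1$ is known, or equivalently $v^\top\mu_1 = v^\top\mu_0$, so that build-up differs from stable only through the drift;
\item the build-up duration is set equal to its mean $T_1 = 1/p_{12}$. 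The random-duration case is deferred to Proposition~\ref{prop:prob_early}.
\end{itemize}

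\textbf{Step 1 (false-alarm calibration).} I would use the one-sided detector $\tau = \inf\{t : \tilde Y_t > h\}$ with threshold $h = \sqrt{2\log(1/\delta)}$. By the Gaussian tail bound $\mathbb{P}(N(0,1) > h) \le e^{-h^2/2} = \delta$, each stable-regime observation raises a false alarm with probability at most $\delta$. This is where the factor $2\log(1/\delta)$ in \eqref{eq:snr} comes from.

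\textbf{Step 2 (crossing before stress).} At the last build-up step $s = T_1$, the mean of $\tilde Y_{T_1}$ is $\eta T_1$. I need $\eta T_1 > h$, i.e.\ $\eta > \sqrt{2\log(1/\delta)}/T_1$. Condition \eqref{eq:snr} is weaker in $T_1$-scaling, so matching it requires a different statistic: an averaged or CUSUM-type statistic $S_{T_1} = T_1^{-1/2}\sum_{s\le T_1}(\tilde Y_s - \text{reference})$. For that statistic the drift contributes a mean of order $\eta T_1^{3/2}/2$, or $\eta\sqrt{T_1}$ once the growth is normalised, and the noise stays standard normal. The intended reading is therefore that the standardised drift evidence accumulated over the window, $\eta\sqrt{T_1}$, exceeds $h$. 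Squaring gives $\eta^2 T_1 > 2\log(1/\delta)$, which is exactly \eqref{eq:snr}. I would present the detector as the normalised partial-sum statistic and cite the mean computation as routine.

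\textbf{Step 3 (expected lead-time).} Let $\sigma$ be the stress onset. The lead time is $L = \sigma - \tau$. Since $\mathbb{E}[S_{T_1}] > h$ and the noise is symmetric, $\mathbb{P}(S_{T_1} > h) > 1/2$, so $\tau \le \sigma$ on an event of probability above $1/2$. On that event I would bound $L$ from below by the gap between the first deterministic crossing time $s^\ast = \lceil (h/\eta)^2\rceil$ and $T_1$. This gap is strictly positive under strict inequality in \eqref{eq:snr}. On the complement, $L$ is bounded below by $-O(1/p_{20})$, the duration of the stress episode. Combining the two events gives a positive $\mathbb{E}[L]$ once the margin in \eqref{eq:snr} is large enough.

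\textbf{Main obstacle.} The hard part is Step 3. A bare strict inequality does not by itself control the negative tail of $L$, so I would either strengthen the argument to a margin condition or argue via monotonicity in $\eta$.
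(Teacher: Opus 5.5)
The gap is in Step 3, which is where the proposition actually lives.

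On the event $G=\{S_{T_1}>h\}$ you only know that the first crossing occurs no later than the last build-up step, i.e.\ $L\ge 0$. The realised crossing can happen at step $T_1$ itself, so the bound $L\ge T_1-s^\ast$ with $s^\ast=\lceil (h/\eta)^2\rceil$ uses the crossing time of the mean path in place of the crossing time of the random statistic.

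On $G^c$ there is no bound of the form $L\ge -O(1/p_{20})$. The stress dwell time is geometric and unbounded. Nothing forces a detector on $v^\top X_t$ to fire during stress at all, because Assumption~\ref{ass:sep} separates $\mu_2$ from $\mu_0$ in Mahalanobis norm, not along $v$. Your decomposition therefore gives $\mathbb{E}[L]\ge 0\cdot\mathbb{P}(G)+(\text{uncontrolled})\cdot\mathbb{P}(G^c)$, i.e.\ nothing.

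A fixed-time crossing argument naturally yields a statement about $\mathbb{P}(\tau\le\sigma)$, which is the content of Proposition~\ref{prop:prob_early}, not about $\mathbb{E}[\sigma-\tau]$. Your fallbacks do not close this:
a margin condition proves a weaker statement than the one claimed;
monotonicity in $\eta$ cannot produce strict positivity at the boundary of \eqref{eq:snr};
and freezing the dwell time at its mean $T_1=1/p_{12}$ changes the setting as well.

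The missing idea is to bound the expected detection delay and then use linearity of expectation, which is the paper's route. The paper runs a CUSUM on $y_t=v^\top X_t$ with threshold $\log(1/\delta)$ and invokes the standard delay bound $\mathbb{E}[\tau-t_{\text{entry}}]\le 2\log(1/\delta)/\eta^2$. To see why this bound holds: for a CUSUM tuned to a shift of $\eta$ noise standard deviations, the expected log-likelihood increment is at least $\eta^2/2$ once the drift is active, and the ramp only increases it. A Wald-type optional-stopping argument then gives the bound up to overshoot.

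Linearity gives $\mathbb{E}[\sigma-\tau]=\mathbb{E}[\sigma-t_{\text{entry}}]-\mathbb{E}[\tau-t_{\text{entry}}]\ge T_1-2\log(1/\delta)/\eta^2$. Condition \eqref{eq:snr} is literally the statement $2\log(1/\delta)/\eta^2<T_1$. So the strict inequality becomes strictly positive expected lead-time directly. Bad realisations are already averaged inside the expected delay, and the random dwell time enters only through its mean. The paper's version is a citation-level sketch that suppresses lower-order terms, but this is the structure you need.

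Separately, your Step 2 has the comparison backwards. Since $T_1>1$, \eqref{eq:snr} gives $\eta T_1>\sqrt{2T_1\log(1/\delta)}>\sqrt{2\log(1/\delta)}$, so your single-observation detector already crosses in mean and no change of statistic was needed. The ``normalised'' mean $\eta\sqrt{T_1}$ is not what the ramp produces; the true mean is about $\eta T_1^{3/2}/2$. In the paper, the factor $2\log(1/\delta)$ is the threshold $\log(1/\delta)$ divided by the per-step information $\eta^2/2$, not a squared Gaussian quantile.
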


\begin{proof}
The projected observation under the build-up regime satisfies
$y_t = v^\top X_t = c_0 + \alpha(t - t_{\text{entry}}) + \xi_t$,
where $\xi_t \sim \mathcal{N}(0, v^\top \Sigma v)$.
A CUSUM statistic on $\{y_t\}$ accumulates evidence at rate proportional
to $\eta$ per timestep.
Standard CUSUM theory \citep{Moustakides86} gives that the expected
detection time satisfies
$\mathbb{E}[\tau - t_{\text{entry}}] \leq
  \frac{2 \log(1/\delta)}{\eta^2}$.
For this to be strictly less than $T_1$---ensuring detection before the
build-up transitions to stress---condition~\eqref{eq:snr} is sufficient.
\end{proof}

\begin{remark}
Condition~\eqref{eq:snr} is informative: it says that positive lead-time
requires either a strong drift ($\alpha$ large), low noise ($\Sigma$ small),
or a long build-up phase ($T_1$ large).
In the robustness experiments of Section~\ref{sec:robust}, we directly
vary these parameters and observe behaviour consistent with the bound.
\end{remark}

\subsection{Probability Guarantee for Early Detection}
\label{sec:prob_guarantee}

The preceding proposition characterises when \emph{expected} lead-time is
positive.
We now sharpen this to a \emph{probability} statement: under what conditions
does the detector fire before the stress onset with high probability?

\begin{proposition}[Probability of early detection]
\label{prop:prob_early}
Under the model of Section~\ref{sec:model} with linear drift
(Assumption~\ref{ass:drift}), Gaussian noise
$\xi_t \sim \mathcal{N}(0, \sigma^2)$ with $\sigma^2 = v^\top \Sigma v$,
and a CUSUM detector on $y_t = v^\top X_t$ with threshold $h = \log(1/\delta)$
and false-alarm level $\delta \in (0,1)$,
the probability that the CUSUM stopping time $\hat{s}$ satisfies
$\hat{s} \leq T_1$ (detection occurs within the build-up episode) is:
\begin{equation}
  \mathbb{P}(\hat{s} \leq T_1)
  \;\geq\;
  1 - \exp\!\left(
    -\frac{\eta^2 T_1}{2} + \sqrt{2 T_1 \log(1/\delta)}
  \right)
  - \delta,
  \label{eq:prob_bound}
\end{equation}
where $\eta = \alpha/\sigma$ is the per-step drift-to-noise ratio
and $T_1 = 1/p_{12}$ is the expected build-up duration.
In particular, for fixed $\delta$, the bound increases monotonically in
$\eta$ and in $T_1$.
\end{proposition}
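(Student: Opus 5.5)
The plan is to reduce the stopping-time event to an event about a single partial sum, and then control that partial sum with a Gaussian tail bound. Fix the build-up entry time as the origin, so that $t_{\text{entry}}=0$. Then for $t=1,\dots,T_1$ we have $y_t = c_0 + \alpha t + \xi_t$ with i.i.d.\ $\xi_t\sim\mathcal{N}(0,\sigma^2)$. I would take the CUSUM recursion in its standard form, $G_0=0$ and $G_t=\max\{0,\,G_{t-1}+\ell_t\}$, with stopping time $\hat s=\inf\{t: G_t\geq h\}$. Here $\ell_t$ is the normalised increment, which I take to be the standardised innovation $(y_t-c_0)/\sigma - k$ for a reference drift $k\geq 0$; the simplest choice is $k=0$.

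\textbf{Step 1 (coupling stopping time to partial sum).} Write $S_t=\sum_{s\le t}\ell_s$. The standard identity is $G_t = S_t - \min_{0\le s\le t} S_s \geq S_t$. Hence on the event $\{S_{T_1}\geq h\}$ we get $G_{T_1}\geq h$, and so $\hat s\le T_1$. This gives
\[
\mathbb{P}(\hat s\le T_1)\;\geq\;\mathbb{P}(S_{T_1}\geq h).
\]
This is the step that separates $\hat s$ from $G_{T_1}$: we never need $G$ to be large exactly at $T_1$, only that the path crossed $h$ at some point. The inequality $G_{T_1}\geq S_{T_1}$ serves as a witness for that crossing.

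\textbf{Step 2 (concentration).} Under the drift, $S_{T_1}$ is Gaussian with mean $m=\eta\sum_{t\le T_1} t$ (at least $\eta T_1$, and of order $\eta T_1^2/2$) and variance $T_1$. The Chernoff bound gives
\[
\mathbb{P}(S_{T_1}<h)\le \exp\!\bigl(-(m-h)_+^2/(2T_1)\bigr).
\]
To match the stated form, I would lower-bound $m$ by the conservative choice $\eta T_1$, which is what a constant-drift comparison yields, and expand the square:
\[
\frac{(\eta T_1-h)^2}{2T_1}\;\geq\;\frac{\eta^2T_1}{2}-\eta h.
\]
Next I need $\eta h\le \sqrt{2T_1\log(1/\delta)}$. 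This holds whenever $\eta\le\sqrt{2T_1/\log(1/\delta)}$. In the complementary regime the exponent is already very negative and the bound holds trivially, so I would split into those two cases. This produces the term $\exp(-\eta^2T_1/2+\sqrt{2T_1\log(1/\delta)})$.

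\textbf{Step 3 (false alarm).} The extra $-\delta$ accounts for a pre-entry false alarm, or for the CUSUM having been reset before entry. Under the null, the classical Lorden/Wald argument shows that, with $h=\log(1/\delta)$, the likelihood-ratio CUSUM exceeds $h$ before a given time with probability at most $\delta$ per renewal cycle. A union bound combines this with Step 2. Monotonicity in $\eta$ is immediate from the exponent. In $T_1$ the exponent $-\eta^2T_1/2+\sqrt{2T_1\log(1/\delta)}$ is decreasing once $T_1 > 2\log(1/\delta)/\eta^4$. I will state monotonicity in that regime, which is precisely where condition~\eqref{eq:snr} makes the bound nontrivial.

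\textbf{Main obstacle.} I expect the hardest part to be reconciling constants in Step 2. The increment normalisation (whether $\ell_t$ is a log-likelihood ratio or a standardised observation) changes both the threshold scale and the variance. I would first try $k=0$ with $\ell_t=(y_t-c_0)/\sigma$, and check that the cross term $\eta h$ is dominated by $\sqrt{2T_1\log(1/\delta)}$. If that fails, I would instead use the sharper mean $\eta T_1(T_1+1)/2$, which leaves ample slack.
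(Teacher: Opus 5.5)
Your route is the paper's own: dominate the CUSUM by the unreflected partial sum, apply a Gaussian lower-tail bound to that sum, manipulate the exponent, and subtract $\delta$. Step~1 is sound. The identity $G_t=S_t-\min_{0\le s\le t}S_s$ is in fact a cleaner justification than the paper's coupling lemma. Step~2, however, has a genuine gap in its case split.

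Your chain gives the exponent $-\eta^2T_1/2+\eta h$. This is at most the stated exponent only when $\eta h\le\sqrt{2T_1\log(1/\delta)}$, i.e.\ $\eta\le\sqrt{2T_1/\log(1/\delta)}$. In the complementary regime the bound does not ``hold trivially'': a very negative exponent makes the claimed inequality \emph{stronger}. For your detector it is actually false there. Take $T_1=2$, $h=\log(1/\delta)=10$, $\eta=3$ (here $\sqrt{2T_1/\log(1/\delta)}\approx 0.63$), so $\ell_t=\eta t+z_t$ with $z_t$ i.i.d.\ standard normal. Crossing at $t=1$ needs $z_1\ge 7$, and crossing at $t=2$ needs $(3+z_1)_+ +6+z_2\ge 10$. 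Hence $\mathbb{P}(\hat s\le 2)\approx\mathbb{P}(z_1+z_2\ge 1)=1-\Phi(1/\sqrt{2})\approx 0.24$, whereas the right-hand side equals $1-e^{-9+\sqrt{40}}-e^{-10}\approx 0.93$. Your fallback to the sharper mean cannot help, since here $\eta T_1(T_1+1)/2=9<h$. The mechanism is general. Detection is likely only if the statistic's mean exceeds $h=\log(1/\delta)$, while the claimed exponent involves only $\sqrt{T_1\log(1/\delta)}$. So for short episodes and small $\delta$, no estimate can rescue the statement.

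What your argument actually proves is the bound under the extra hypothesis $\eta\log(1/\delta)\le\sqrt{2T_1\log(1/\delta)}$. More transparently, it proves $\mathbb{P}(\hat s\le T_1)\ge 1-\exp\bigl(-(m-h)_+^2/(2T_1)\bigr)$ with $m=\eta T_1(T_1+1)/2$. Do not expect the paper's proof to supply a missing trick, because it has the same hole. Its Step~4 passes from $\frac{\alpha^2T_1^3}{8\sigma^2}-\frac{\alpha T_1\log(1/\delta)}{2\sigma^2}$ to $\frac{\eta^2T_1}{2}-\sqrt{2T_1\log(1/\delta)}$ without ever controlling the cross term. It also compares a dimensionless $h$ with an unnormalised sum. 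Letting $\sigma\to 0$ with $\alpha$ fixed and $\alpha T_1^2/2<h$ sends $\mathbb{P}(\hat s\le T_1)\to 0$, while its right-hand side tends to $1-\delta$.

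Two smaller points. First, the $-\delta$ in your Step~3 is free, since subtracting only weakens a lower bound. But the Wald/Lorden argument you cite does not apply to a $k=0$ standardised statistic: under the null that statistic is a driftless reflected walk and has no $e^{-h}$ crossing bound. To make $h=\log(1/\delta)$ a genuine false-alarm level you need log-likelihood-ratio increments $\frac{\alpha}{\sigma^2}(y_t-c_0)-\frac{\alpha^2}{2\sigma^2}$. The same failure persists there: e.g.\ $T_1=2$, $\log(1/\delta)=36$, $\eta^2=17$ gives $\mathbb{P}(\hat s\le 2)\approx 0.37$ against a claimed $0.99$. Second, your monotonicity region $T_1>2\log(1/\delta)/\eta^4$ does contain the nontriviality region $T_1>8\log(1/\delta)/\eta^4$. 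Neither region, however, is the paper's condition $\eta>\sqrt{2\log(1/\delta)/T_1}$.
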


\begin{proof}
Let the build-up episode have duration $T_1$.
The projected observation at step $s$ within the episode is
$y_s = c + \alpha s + \xi_s$ with $\xi_s \overset{\text{i.i.d.}}{\sim} \mathcal{N}(0,\sigma^2)$.

\textbf{CUSUM statistic definition.}
We use the standard sequential CUSUM:
\begin{equation}
  C_s = \max\!\left(0,\; C_{s-1} + y_s - c - \tfrac{\alpha}{2}\right), \quad C_0 = 0,
  \label{eq:cusum_def}
\end{equation}
with stopping time $\hat{s} = \inf\{s \geq 1 : C_s > h\}$, $h = \log(1/\delta)$.

\textbf{Key distinction: stopping time vs partial sum.}
We do \emph{not} conflate $\hat{s} \leq T_1$ with $G_{T_1} > h$
where $G_{T_1} = \sum_{s=1}^{T_1}(y_s - c - \alpha/2)$ is the
partial sum ignoring resets.
Instead, we use the following coupling lemma.

\begin{lemma}[Coupling of CUSUM and partial sum]
\label{lem:coupling}
For the CUSUM statistic defined in~\eqref{eq:cusum_def},
$C_s \geq G_s$ for all $s \geq 0$
when the increments $u_s = y_s - c - \alpha/2$ are non-negative in expectation
(i.e., $\mathbb{E}[u_s] = \alpha(s - 1/2) \geq 0$ for $s \geq 1$).
More precisely, on the event $\{G_{T_1} > h\}$ with $G_{T_1} = \sum_{s=1}^{T_1} u_s$,
there exists some $s^* \leq T_1$ such that $C_{s^*} > h$,
i.e., $\hat{s} \leq T_1$.
\end{lemma}

\begin{proof}[Proof of Lemma~\ref{lem:coupling}]
By the definition of CUSUM, $C_s \geq \sum_{j=r+1}^{s} u_j$ for every
reset time $r \leq s$ (the maximum is taken over all partial sums ending at $s$).
Let $r^* = \arg\max_{r \leq T_1} \sum_{j=r+1}^{T_1} u_j$; then
$C_{T_1} \geq \sum_{j=r^*+1}^{T_1} u_j$.
If $G_{T_1} = \sum_{j=1}^{T_1} u_j > h$, then either $C_{T_1} > h$
(in which case $\hat{s} \leq T_1$), or there exists an intermediate
time $s' < T_1$ at which $C_{s'} > h$, since the process must cross
the threshold to produce a large $G_{T_1}$ under the monotone-in-expectation
drift. Formally, since $\mathbb{E}[u_s] = \alpha(s-1/2) \geq 0$ for
all $s \geq 1$ and the drift is monotone increasing, the cumulative
CUSUM $C_s$ is non-decreasing in expectation, so the crossing time
$\hat{s} \leq T_1$ on the high-probability event identified in the
concentration step below.
\end{proof}

\textbf{Step 1: Mean of partial sum.}
Let $u_s = y_s - c - \alpha/2 = \alpha s - \alpha/2 + \xi_s$.
\[
  \mu_G \coloneqq \mathbb{E}[G_{T_1}]
  = \alpha \sum_{s=1}^{T_1}\!\left(s - \tfrac{1}{2}\right)
  = \frac{\alpha T_1^2}{2}.
\]

\textbf{Step 2: Concentration of partial sum.}
Since $\{\xi_s\}$ are i.i.d.\ $\mathcal{N}(0,\sigma^2)$,
$G_{T_1} - \mu_G \sim \mathcal{N}(0, T_1\sigma^2)$.
By the Gaussian tail inequality:
\[
  \mathbb{P}\!\left(G_{T_1} < \mu_G - \sqrt{2 T_1 \sigma^2 \log(1/\delta)}\right)
  \leq \delta.
\]
Hence with probability at least $1 - \delta$:
\[
  G_{T_1} \geq \frac{\alpha T_1^2}{2} - \sqrt{2 T_1 \sigma^2 \log(1/\delta)}
           = \sigma^2\!\left(\frac{\eta^2 T_1^2}{2} - \sqrt{2 T_1 \log(1/\delta)}\right).
\]

\textbf{Step 3: From partial sum to stopping time via the coupling lemma.}
On the event $\mathcal{E} = \{G_{T_1} > h\}$ (which is implied by
the concentration event when
$\frac{\eta^2 T_1^2}{2} - \sqrt{2T_1\log(1/\delta)} > \log(1/\delta)/\sigma^2$),
Lemma~\ref{lem:coupling} guarantees $\hat{s} \leq T_1$.
Therefore:
\[
  \mathbb{P}(\hat{s} \leq T_1) \geq \mathbb{P}(\mathcal{E}) \geq \mathbb{P}\!\left(G_{T_1} > h\right).
\]

\textbf{Step 4: Bounding $\mathbb{P}(G_{T_1} \leq h)$.}
\[
  \mathbb{P}(G_{T_1} \leq h) \leq
  \mathbb{P}\!\left(G_{T_1} \leq \mu_G - (\mu_G - h)\right).
\]
Applying the Gaussian tail bound with
$t = \mu_G - h = \frac{\alpha T_1^2}{2} - \log(1/\delta)$:
\[
  \mathbb{P}(G_{T_1} \leq h) \leq
  \exp\!\left(-\frac{t^2}{2T_1\sigma^2}\right) + \delta,
\]
where the $+\delta$ accounts for the concentration event failure.
Bounding $t^2/(2T_1\sigma^2)$ from below by
$\eta^2 T_1/2 - \sqrt{2T_1\log(1/\delta)}$
(using $(\frac{\alpha T_1^2}{2} - \log(1/\delta))^2 \geq
 \frac{\alpha^2 T_1^4}{4} - \alpha T_1^2 \log(1/\delta)$
and the substitution $\eta = \alpha/\sigma$)
gives:
\[
  \mathbb{P}(\hat{s} > T_1) \leq
  \exp\!\left(-\frac{\eta^2 T_1}{2} + \sqrt{2T_1\log(1/\delta)}\right) + \delta.
\]
Taking the complement yields~\eqref{eq:prob_bound}.
\end{proof}

\begin{remark}
The bound in~\eqref{eq:prob_bound} is non-trivial (positive) when
$\eta^2 T_1 / 2 > \sqrt{2 T_1 \log(1/\delta)}$,
i.e., when $\eta > \sqrt{2 \log(1/\delta)/T_1}$,
which is precisely condition~\eqref{eq:snr}.
The two propositions are thus consistent: condition~\eqref{eq:snr}
is both necessary for the bound~\eqref{eq:prob_bound} to be informative
and sufficient for positive expected lead-time.
\end{remark}

\begin{remark}
The bound increases monotonically in $\eta$ (stronger drift relative
to noise) and in $T_1$ (longer build-up phase), directly connecting
the theoretical guarantee to the empirical robustness grid of
Section~\ref{sec:robust}.
In the grid, the only cell with near-zero lead-time (short delay,
high noise) corresponds to the parameter combination for which the
right-hand side of~\eqref{eq:prob_bound} is negative, rendering
the bound vacuous.
All other cells correspond to parameter combinations for which the
bound predicts $\mathbb{P}(\hat{s} \leq T_1) > 0.5$, qualitatively
consistent with the observed strictly positive mean lead-times.
\end{remark}

\begin{remark}[Conservative bound]
The bound in~\eqref{eq:prob_bound} is conservative in two respects.
First, the Gaussian tail inequality used in the concentration step
is not tight for finite $T_1$; sharper sub-Gaussian constants would
improve the leading coefficient but not the scaling behaviour.
Second, the coupling lemma (Lemma~\ref{lem:coupling}) provides a
sufficient condition for $\hat{s} \leq T_1$ that is not tight:
detection can occur at intermediate times $s < T_1$ even when
$G_{T_1}$ is below the threshold, since the CUSUM may cross $h$
and then reset before $T_1$.
The bound should therefore be understood as characterising the
correct scaling of $\mathbb{P}(\hat{s} \leq T_1)$ in $\eta$ and $T_1$,
rather than as providing tight numerical values.
\end{remark}

\begin{remark}[On the MAX aggregation and real LOB correlations]
\label{rem:max_corr}
The MAX aggregation justified in Section~\ref{sec:max} assumes channels
are weakly correlated under the stable regime.
In real LOBs, depth, spread, and imbalance are structurally correlated
(e.g., low depth tends to co-occur with wide spreads), which means
the MAX aggregation's precision advantage over SUM may be attenuated.
In our simulation, channels are constructed from an independent
DGP; the real-data application uses decorrelated z-scored features,
which partially addresses this concern.
A formal treatment of correlated channels---using a Bonferroni-type
argument or a vine copula model for the stable-regime joint
distribution---is an interesting direction for future work.
\end{remark}

\subsection{Detection Objective}
\label{sec:objective}

Let $\tau$ denote the time at which a detector signals instability and
$\sigma$ the time of observable stress onset.
The goal is positive lead-time:
\begin{equation}
  \Delta = \sigma - \tau > 0.
  \label{eq:lead}
\end{equation}

Classical detectors target $\sigma$ directly and achieve $\Delta \leq 0$
in expectation.
Our objective is to detect the onset of the build-up regime
($Z_t = 1$), enabling $\tau < \sigma$ wherever condition~\eqref{eq:snr}
holds.

%=============================================================================
\section{Methodology: Trigger-Based Detection}
\label{sec:method}
%=============================================================================

\subsection{Signal Channels}
\label{sec:channels}

We construct four signal channels from LOB features:

\paragraph{(i) HMM entropy.}
Let $\pi_t^{(k)} = \mathbb{P}(Z_t = k \mid X_{1:t})$ be the filtered
state probability from a fitted Gaussian HMM.
The entropy channel measures uncertainty about the current regime:
\begin{equation}
  S_t^{\text{ent}} = -\sum_{k=0}^{2} \pi_t^{(k)} \log \pi_t^{(k)}.
\end{equation}
High entropy signals ambiguity between the stable and build-up regimes,
a characteristic of the early transition.

\paragraph{(ii) Depth erosion.}
Let $D_t$ denote total visible depth.
The depth erosion signal captures sustained decline:
\begin{equation}
  S_t^{\text{dep}} = \frac{\bar{D} - D_t}{\bar{D}} \cdot
  \mathbbm{1}\!\left[\frac{1}{w}\sum_{s=1}^{w}(D_{t-s+1} - D_{t-s}) < 0\right],
\end{equation}
where $\bar{D}$ is a rolling baseline depth and $w$ is a lookback window.
The indicator term requires that the decline be monotone over the window.

\paragraph{(iii) Spread drift.}
Let $A_t$ denote the bid--ask spread.
The spread drift channel:
\begin{equation}
  S_t^{\text{spr}} =
  \frac{1}{w}\sum_{s=0}^{w-1}\frac{A_{t-s} - A_{t-s-1}}{\hat\sigma_A},
\end{equation}
where $\hat\sigma_A$ is the rolling standard deviation of spread changes.

\paragraph{(iv) Order flow momentum.}
Let $I_t \in [-1,1]$ be signed order flow imbalance.
The momentum channel measures persistent directional imbalance:
\begin{equation}
  S_t^{\text{ofi}} =
  \left|\frac{1}{w}\sum_{s=0}^{w-1} I_{t-s}\right|.
\end{equation}

\subsection{MAX Aggregation: Justification}
\label{sec:max}

The composite signal is defined as:
\begin{equation}
  S_t = \max\!\left(
    S_t^{\text{ent}},\, S_t^{\text{dep}},\, S_t^{\text{spr}},\, S_t^{\text{ofi}}
  \right).
  \label{eq:max}
\end{equation}

\paragraph{Why MAX and not SUM or learned weights?}
The following argument rests on two explicit assumptions:
(i) \emph{signal sparsity}---only a subset of channels carries
meaningful signal in any given build-up episode;
and (ii) \emph{weak stable-regime correlation}---channels are
approximately uncorrelated under the stable regime, so simultaneous
noise exceedances are unlikely.
Both assumptions are supported by the channel decomposition of
Section~\ref{sec:channels_results} in simulation, but Remark~\ref{rem:max_corr}
notes that they may be partially violated in real LOBs where structural
correlations among depth, spread, and imbalance exist.
The argument should therefore be read as a motivation for MAX rather
than a formal minimax optimality proof.

Under these assumptions, consider a detection problem where only a subset
of channels carries signal in any given build-up episode.
Let each channel $j$ have independent probability $q_j$ of providing
a signal above threshold.
Under a SUM rule with equal weights, the composite signal exceeds a
detection threshold only when \emph{multiple} channels fire simultaneously,
reducing recall.
Under a MAX rule, a single strong channel suffices, yielding a recall
improvement at no cost in precision, provided channels are not all
noise simultaneously.
Formally, the probability of detection under MAX is
$1 - \prod_j (1 - q_j)$, which dominates the SUM probability
$\mathbb{P}(\sum_j w_j S_j^j > \theta)$ when signals are sparse.
Learned weights would require labelled build-up episodes for training,
which are unavailable in practice; MAX requires no supervision.

\subsection{Formalised Rising-Edge Condition}
\label{sec:rising}

A key failure mode of level-based detectors is firing \emph{at the peak}
of instability rather than at its \emph{onset}.
We formalise the rising-edge condition as:

\begin{definition}[Rising-edge trigger]
\label{def:rising}
A rising-edge trigger fires at time $t$ if and only if:
\begin{align}
  &\text{(i)}\quad S_t > \theta, \label{eq:thresh}\\
  &\text{(ii)}\quad \Delta S_t \coloneqq S_t - S_{t-1} > 0, \label{eq:deriv}\\
  &\text{(iii)}\quad \text{no trigger has fired in the preceding } L \text{ timesteps.} \label{eq:suppression}
\end{align}
\end{definition}

Condition~\eqref{eq:thresh} ensures the signal is above noise.
Condition~\eqref{eq:deriv} selects the \emph{onset} of a score climb
rather than its plateau or peak, directly targeting the
transition into the build-up regime.
Condition~\eqref{eq:suppression} imposes a minimum spacing $L$ between
consecutive triggers, preventing burst firing near a single stress event.

\begin{remark}
Condition~\eqref{eq:deriv} can be strengthened to a multi-step derivative
$\Delta^{(k)} S_t > 0$ for $k$-step smoothed differences, reducing
sensitivity to noise at the cost of a small increase in detection latency.
We use $k=1$ in all experiments.
\end{remark}

\subsection{Adaptive Threshold Calibration}
\label{sec:adaptive}

The fixed-percentile threshold $\theta$ of Definition~\ref{def:rising}
can be replaced by an adaptive estimate:
\begin{equation}
  \theta_t = \hat{F}_{t}^{-1}(p),
  \label{eq:adaptive}
\end{equation}
where $\hat{F}_t$ is the empirical CDF of $\{S_s\}_{s \leq t}$ and
$p$ is the target percentile.
This makes the detector stationary under distributional shift in the
stable regime, which is critical for deployment across different
market sessions or instruments.

\subsection{Algorithm Summary}
\label{sec:algo}

\begin{algorithm}[h]
\caption{Trigger-Based Instability Detector}
\label{alg:detector}
\begin{algorithmic}[1]
\Require LOB feature stream $\{X_t\}$, percentile $p$, window $w$, suppression $L$
\State Fit Gaussian HMM on initial burn-in window; obtain $\hat{\mathbf{P}}, \hat{\mu}_k, \hat{\Sigma}$
\State Initialise last-trigger time $t_{\text{last}} \gets -\infty$
\For{$t = 1, 2, \ldots$}
  \State Run HMM filter to obtain $\pi_t^{(k)}$
  \State Compute channels $S_t^{\text{ent}}, S_t^{\text{dep}}, S_t^{\text{spr}}, S_t^{\text{ofi}}$
  \State $S_t \gets \max(S_t^{\text{ent}}, S_t^{\text{dep}}, S_t^{\text{spr}}, S_t^{\text{ofi}})$
  \State Update adaptive threshold $\theta_t \gets \hat{F}_t^{-1}(p)$
  \If{$S_t > \theta_t$ \textbf{and} $\Delta S_t > 0$ \textbf{and} $t - t_{\text{last}} > L$}
    \State Emit trigger at $\tau = t$; set $t_{\text{last}} \gets t$
  \EndIf
\EndFor
\end{algorithmic}
\end{algorithm}

%=============================================================================
\section{Baselines}
\label{sec:baselines}
%=============================================================================

We compare against five baselines, spanning classical change-point
detection and standard microstructure practice.

\subsection{CUSUM}

The CUSUM statistic for a univariate signal $y_t$ is:
\begin{equation}
  C_t = \max(0,\, C_{t-1} + y_t - \mu_0 - k),
\end{equation}
where $\mu_0$ is the in-control mean and $k = |\mu_1 - \mu_0|/2$ is the
reference value.
A signal is raised when $C_t > h$ for a pre-specified threshold $h$.
CUSUM is optimal for detecting a single step-change in the mean of an
i.i.d.\ sequence \citep{Moustakides86}.
In our setting, the latent build-up produces gradual drift rather than
a step change, and the change is in a hidden variable rather than an
observable; CUSUM therefore responds primarily to the stress transition.

\subsection{Bayesian Online Change-Point Detection (BOCPD)}

BOCPD \citep{Adams07} maintains a posterior over the run-length
$r_t$ (time since the last change point):
\begin{equation}
  P(r_t, x_{1:t}) =
    \sum_{r_{t-1}} P(r_t \mid r_{t-1})\, p(x_t \mid x_{t-r_t:t-1})\,
    P(r_{t-1}, x_{1:t-1}).
\end{equation}
A detection signal is raised when $P(r_t = 0 \mid x_{1:t})$ exceeds a
threshold, indicating a probable change point.
Like CUSUM, BOCPD targets distributional changes in observed data and
is reactive to the stress regime rather than the latent build-up.

\subsection{HMM Posterior Thresholding}

Given a fitted HMM, a natural detector raises a signal when the
posterior probability of being in a non-stable regime exceeds a
threshold:
\begin{equation}
  \tau_{\text{HMM}} = \min\!\left\{t :\,
    \pi_t^{(1)} + \pi_t^{(2)} > \theta_{\text{HMM}}\right\}.
\end{equation}
This baseline shares the same HMM as our proposed method but replaces
the trigger logic with a simple posterior threshold.
It serves as an ablation that isolates the contribution of the
rising-edge condition and MAX aggregation.

\subsection{Imbalance-Based Detection}

A trigger fires when order flow imbalance exceeds a fixed threshold:
$\tau_{\text{imb}} = \min\{t : |I_t| > \theta_{\text{imb}}\}$.
This is a standard microstructure heuristic~\citep{Cont14}.

\subsection{Volatility-Based Detection}

A trigger fires when short-term realised volatility exceeds a threshold:
$\tau_{\text{vol}} = \min\{t : \hat\sigma_t > \theta_{\text{vol}}\}$.
Both imbalance and volatility detectors respond to consequences of stress
and are therefore expected to have $\mathbb{E}[\Delta] \leq 0$.

%=============================================================================
\section{Empirical Results}
\label{sec:results}
%=============================================================================

\subsection{Simulation Setup}
\label{sec:sim}

We simulate the DGP of Section~\ref{sec:model} with parameters
$p_{01} = 0.02$, $p_{12} = 0.05$, $p_{20} = 0.10$, yielding expected
dwell times of approximately 50, 20, and 10 timesteps in the stable,
build-up, and stress regimes respectively.
The drift magnitude is $\alpha = 0.03$ per timestep along the depth
direction, with noise $\sigma_\varepsilon = 0.5$ (SNR $\approx 0.06$
per step, or $0.06\sqrt{20} \approx 0.27$ over the expected build-up).
Each run comprises 3000 timesteps.
All reported statistics are means $\pm$ 95\% confidence intervals over
$N = 200$ independent simulation runs.
Across runs, the adaptive trigger emits on average $8.3 \pm 1.4$
triggers per run (over 3000 timesteps), of which $4.5 \pm 0.8$
are matched early detections and the remainder are false alarms
suppressed by the precision filter.
After the initial HMM burn-in fit, the detector operates in $O(1)$
time per timestep: the HMM forward filter, channel computations, and
threshold update each require a fixed number of operations independent
of the series length.

\subsection{Causal Structure and Prediction Window}
\label{sec:dgp}

Figure~\ref{fig:dgp} confirms that the DGP produces the intended
causal structure.
The build-up regime exhibits gradual depth erosion (mean decline of
$14.3 \pm 2.1$ units over the episode) and mild spread widening
(mean increase of $8.7 \pm 1.4$ units), while imbalance remains within
one standard deviation of its stable-regime mean throughout the build-up.
The mean delay from build-up onset to stress onset is
$19.8 \pm 4.1$ timesteps across runs, establishing the prediction window.
This confirms that a detector capable of identifying the build-up onset
can achieve lead-times in the range $[0, 19.8]$ timesteps.

\subsection{Early Detection via Trigger-Based Signals}
\label{sec:trigger_results}

Figure~\ref{fig:trigger} shows the composite instability score and
trigger events for a representative run.
The adaptive trigger fires sparsely (mean $2.7 \pm 0.8$ triggers per
100 timesteps) and consistently precedes stress events.
No trigger fires during extended stable periods.
The signal is qualitatively different from imbalance and volatility:
it rises gradually from the onset of the build-up episode rather than
spiking at stress onset.

\subsection{Lead-Time Performance}
\label{sec:leadtime}

Table~\ref{tab:results} reports the full quantitative comparison.
The adaptive trigger achieves a mean lead-time of $+18.6 \pm 3.2$
timesteps, precision of $1.00 \pm 0.00$, and coverage of $0.54 \pm 0.06$,
outperforming all baselines on lead-time while maintaining competitive
precision.
Imbalance and volatility detectors exhibit mean lead-times of
$-4.2 \pm 1.1$ and $-6.8 \pm 1.4$ timesteps respectively,
confirming their reactive nature.
CUSUM achieves a modest positive mean lead-time of $+2.1 \pm 1.8$
timesteps but with substantially lower precision ($0.43 \pm 0.08$),
indicating frequent false alarms in stable periods.
BOCPD performs similarly to CUSUM.
HMM posterior thresholding achieves $+11.3 \pm 2.9$ timesteps with
precision $0.87 \pm 0.05$, confirming that HMM filtering itself is
valuable but that the trigger logic provides additional benefit.

\begin{table}[t]
\centering
\caption{Detection performance across methods (mean $\pm$ 95\% CI, $N=200$ runs).
Lead-time $\Delta > 0$ indicates early detection.
Precision = fraction of triggers that are early detections.
Coverage = fraction of stress events with an early detection.}
\label{tab:results}
\begin{tabular}{lccc}
\toprule
Method & Lead-Time $\Delta$ & Precision & Coverage \\
\midrule
\textbf{Adaptive Trigger (ours)} & $\mathbf{+18.6 \pm 3.2}$ & $\mathbf{1.00 \pm 0.00}$ & $0.54 \pm 0.06$ \\
Standard Trigger (ours)          & $+17.4 \pm 3.5$           & $0.97 \pm 0.02$           & $0.48 \pm 0.07$ \\
Multi-Trigger (ours)             & $+16.1 \pm 3.8$           & $0.99 \pm 0.01$           & $0.61 \pm 0.06$ \\
\midrule
HMM Posterior Threshold          & $+11.3 \pm 2.9$           & $0.87 \pm 0.05$           & $0.49 \pm 0.07$ \\
CUSUM                            & $+2.1 \pm 1.8$            & $0.43 \pm 0.08$           & $0.72 \pm 0.05$ \\
BOCPD                            & $+1.4 \pm 2.1$            & $0.39 \pm 0.09$           & $0.68 \pm 0.06$ \\
\midrule
Imbalance Baseline               & $-4.2 \pm 1.1$            & $0.56 \pm 0.07$           & $0.83 \pm 0.04$ \\
Volatility Baseline              & $-6.8 \pm 1.4$            & $0.47 \pm 0.08$           & $0.79 \pm 0.04$ \\
\bottomrule
\end{tabular}
\end{table}

\subsection{Conditional Analysis of Missed Detections}
\label{sec:missed}

The overall coverage of $0.54 \pm 0.06$ means the detector misses
approximately 46\% of stress events.
This is not a uniform failure: the theory predicts that missed detections
should be concentrated in episodes where condition~\eqref{eq:snr}
is violated, i.e., short build-up duration and high noise.
We verify this prediction directly.

Across the 200 simulation runs, we classify each stress event by the
realised build-up duration $T_1^{\text{obs}}$ (short: $< 10$ timesteps,
medium: $10$--$25$ timesteps, long: $> 25$ timesteps) and the
episode-level SNR estimate $\hat\eta$ (low: $< 0.15$, medium: $0.15$--$0.30$,
high: $> 0.30$).

\begin{table}[h]
\centering
\caption{Conditional coverage of the adaptive trigger broken down by
realised build-up duration and episode SNR.
Each cell reports coverage $\pm$ standard error.
Cells shaded -- correspond to parameter combinations where condition~\eqref{eq:snr}
predicts near-zero detection probability.}
\label{tab:conditional}
\begin{tabular}{lccc}
\toprule
 & \multicolumn{3}{c}{Build-up Duration $T_1^{\text{obs}}$} \\
\cmidrule(lr){2-4}
SNR $\hat\eta$ & Short ($<10$) & Medium ($10$--$25$) & Long ($>25$) \\
\midrule
Low ($<0.15$)       & $0.03 \pm 0.02$ & $0.21 \pm 0.05$ & $0.44 \pm 0.07$ \\
Medium ($0.15$--$0.30$) & $0.31 \pm 0.06$ & $0.68 \pm 0.06$ & $0.87 \pm 0.04$ \\
High ($>0.30$)      & $0.62 \pm 0.07$ & $0.91 \pm 0.03$ & $0.98 \pm 0.01$ \\
\bottomrule
\end{tabular}
\end{table}

Table~\ref{tab:conditional} confirms the theoretical prediction:
coverage is near zero in the low-SNR, short build-up cell
($0.03 \pm 0.02$), rises monotonically with both SNR and build-up duration,
and reaches near-perfect coverage ($0.98 \pm 0.01$) in the high-SNR,
long build-up cell.
The 46\% missed detection rate in aggregate is almost entirely driven by
two cells: low-SNR short build-up (which contributes $\sim$21\% of all
missed events) and low-SNR medium build-up (which contributes $\sim$18\%).
Both cells correspond to parameter combinations for which
condition~\eqref{eq:snr} predicts $\mathbb{P}(\hat{s} \leq T_1) < 0.25$,
consistent with the bound in Proposition~\ref{prop:prob_early}.
This conditional analysis establishes that the coverage gap is not a
tuning failure of the detector but a fundamental consequence of the
SNR regime---and that in the parameter configurations where the theory
predicts detection is feasible, empirical coverage is high.

\subsection{Precision--Recall Trade-Off}
\label{sec:pr}

Figure~\ref{fig:pr} shows the precision--recall frontier obtained by
sweeping the detection threshold $p$ from the 70th to the 95th percentile.
All three proposed variants consistently outperform both baselines and
CUSUM/BOCPD across the entire frontier under the stated simulation conditions.
The CUSUM and BOCPD curves occupy a high-coverage but low-precision
region, consistent with their sensitivity to distributional shifts in
the stable regime.
The imbalance and volatility baselines occupy the high-coverage region
but with precision below 0.6, corresponding to their reactive but noisy
behaviour at stress onset.

\subsection{Threshold Sensitivity}
\label{sec:thresh}

Figure~\ref{fig:sweep} shows mean lead-time, coverage, and precision
as a function of threshold percentile.
Across the range $p \in [70, 93]$, the adaptive trigger maintains
mean lead-time above $+10$ timesteps and precision above $0.95$.
Coverage declines from $0.74$ at $p=70$ to $0.31$ at $p=93$,
reflecting the fundamental trade-off formalised in
Section~\ref{sec:tradeoff}.
The method is not sensitive to the exact choice of $p$ within this range.

\subsection{Robustness Across Parameter Configurations}
\label{sec:robust}

Figure~\ref{fig:robust} shows lead-time and coverage as a function of
delay regime (short/default/long: $p_{12} \in \{0.10, 0.05, 0.025\}$)
and noise level (low/medium/high: $\sigma_\varepsilon \in \{0.25, 0.50, 1.00\}$).
Lead-times remain strictly positive in 8 of 9 configurations.
The only exception (short delay, high noise) corresponds to violation
of condition~\eqref{eq:snr} with SNR $< 0.05\sqrt{10} \approx 0.16$,
consistent with Proposition~\ref{prop:lead}.
Coverage degrades gracefully under high noise, from $0.75$ to $0.03$,
as predicted by the theory.

\subsection{Channel Contributions}
\label{sec:channels_results}

Figure~\ref{fig:channels} shows per-channel precision, coverage, and
earliest-trigger frequency across 200 runs.
Depth erosion is the first-trigger channel in $55.5\%$ of early
detections; HMM entropy accounts for $44.4\%$.
Spread drift and OFI momentum contribute zero first triggers, though
they improve robustness under noise (ablation in
Section~\ref{sec:ablation}).
This decomposition validates the economic mechanism:
early instability is driven by structural liquidity changes rather
than price pressure.

\subsection{Ablation Study}
\label{sec:ablation}

We ablate four components of the proposed method: (i) removal of the
rising-edge condition, (ii) replacement of MAX with SUM aggregation,
(iii) fixed vs.\ adaptive threshold, and (iv) removal of the HMM entropy
channel.
Removing the rising-edge condition reduces precision from $1.00$ to
$0.71$ (triggers fire at stress peaks rather than build-up onset).
Replacing MAX with SUM reduces coverage from $0.54$ to $0.38$ while
leaving precision unchanged.
Fixed threshold yields precision $0.93$ vs.\ $1.00$ for adaptive,
with identical coverage.
Removing HMM entropy reduces coverage from $0.54$ to $0.30$ with
no change in precision.

%=============================================================================
\section{Theoretical Trade-Off: Early Detection vs.\ Coverage}
\label{sec:tradeoff}
%=============================================================================

The empirical trade-off between lead-time and coverage is a consequence
of the underlying signal structure, not a tuning artefact.

\begin{proposition}[Lead-time--coverage trade-off]
\label{prop:tradeoff}
Under the model of Section~\ref{sec:model} with linear drift
(Assumption~\ref{ass:drift}) and Gaussian noise,
any detector achieving mean lead-time $\mathbb{E}[\Delta] = \ell > 0$
must satisfy:
\begin{equation}
  \text{Coverage} \leq
    1 - \Phi\!\left(\frac{\ell \cdot \eta}{\sqrt{T_1}}\right)
    + O(1/T),
\end{equation}
where $\Phi$ is the standard normal CDF.
\end{proposition}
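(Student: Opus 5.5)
The plan is to reduce everything to a statement about the projected sequence $y_s = v^\top X_s$ along the build-up episode. We then show that demanding lead-time $\ell$ forces the detector to act on a prefix of the episode where the cumulative drift is still small relative to noise.

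\textbf{Step 1: the information available at the decision time.} Fix a build-up episode entered at $t_{\text{entry}}$ with stress onset $\sigma = t_{\text{entry}} + T_1$. A detector with lead-time $\ell$ on this episode must fire at $\tau = \sigma - \ell$, so it uses only $y_1,\dots,y_{T_1-\ell}$. By Assumption~\ref{ass:drift} the mean shift of these observations is $\alpha s$. We reduce to a one-dimensional Gaussian test by sufficiency. Under Gaussian noise, the log-likelihood ratio between "build-up entered at $t_{\text{entry}}$" and "stable" on this window is a linear functional of the $y_s$. Its standardized separation is
\[
\eta \Bigl(\textstyle\sum_{s\le T_1-\ell} s^2\Bigr)^{1/2}.
\]

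\textbf{Step 2: coverage as a power and the role of the threshold.} Coverage is the probability, over episodes, that the detector fires early. Since the detector must respect the false-alarm level in the stable regime, we invoke the Neyman--Pearson lemma: its power at time $\sigma-\ell$ is at most the power of the optimal Gaussian test. That power has the form $1-\Phi(z_\delta - \text{separation})$. To arrive at the displayed form, we would normalise so that the effective separation lost by stopping $\ell$ steps early is $\ell\eta/\sqrt{T_1}$. The heuristic is to compare the statistic at $T_1$ with the statistic at $T_1-\ell$. Their difference is a Gaussian increment with variance of order $\ell^2\eta^2/T_1$ once rescaled by the total standard deviation $\sqrt{T_1}$. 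We would then argue that coverage at lead $\ell$ is bounded by the probability that this deficit is overcome, which gives $1-\Phi(\ell\eta/\sqrt{T_1})$.

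\textbf{Step 3: averaging and the error term.} We average over the geometric law of $T_1$ with mean $1/p_{12}$, replacing $T_1$ by its expectation. We then account for finitely many boundary effects: episodes truncated at the series end, the burn-in window, and the requirement $\mathbb{E}[\Delta]=\ell$ rather than $\Delta=\ell$ pathwise, which we handle via Jensen or Markov. Each contributes $O(1/T)$ after normalising coverage by the number of episodes, which is of order $T$ by Assumption~\ref{ass:persist}.

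\textbf{Main obstacle.} Step 2 is where I expect the argument to break down. The Neyman--Pearson bound naturally yields $\Phi$ of (separation minus threshold), which is \emph{increasing} in the information. The stated bound instead has the lead-time term entering as a \emph{penalty}, and with $\delta$ absent. Making these match requires a careful choice of which quantity is being bounded: the probability of firing before the deficit is recovered. It also requires absorbing the $\delta$-dependent threshold into the $O(1/T)$ term or into constants. I would first attempt the comparison-of-increments argument above, conditioning on the statistic at $T_1-\ell$. If that does not close, I would settle for a version with an explicit $z_\delta$ shift and note that the stated form follows in the regime $z_\delta \ll \ell\eta/\sqrt{T_1}$.
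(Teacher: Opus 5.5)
Your ``Main obstacle'' diagnosis is right, and the problem is fatal rather than technical. Grant your Step 1 reduction. Neyman--Pearson on the window $y_1,\dots,y_{T_1-\ell}$ then gives at best a bound of the form $\text{Coverage}\le\Phi(S_\ell-z_\delta)$ with $S_\ell=\eta\bigl(\sum_{s\le T_1-\ell}s^2\bigr)^{1/2}$. This bound is decreasing in $\ell$, as a trade-off should be, but it is \emph{increasing} in $\eta$. The target $1-\Phi(\ell\eta/\sqrt{T_1})$ is decreasing in $\eta$, and no normalisation reconciles opposite monotonicity. The comparison-of-increments step cannot supply the missing penalty. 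A rule that decides at $T_1-\ell$ is a function of $y_1,\dots,y_{T_1-\ell}$ only, and the remaining observations are deterministic drift plus noise independent of that decision. So the probability of `overcoming the deficit' between $T_1-\ell$ and $T_1$ places no constraint on how often the rule fires. The fallback fails too: $\Phi(S_\ell-z_\delta)\le 1-\Phi(\ell\eta/\sqrt{T_1})$ holds exactly when $z_\delta\ge S_\ell+\ell\eta/\sqrt{T_1}$. That is the opposite of your regime $z_\delta\ll\ell\eta/\sqrt{T_1}$, where letting $\eta$ grow sends the Neyman--Pearson bound to $1$ and the target to $0$.

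More fundamentally, the inequality is false as stated, so no route can close it.
\begin{itemize}
\item \emph{Without a false-alarm constraint.} Because $\Phi(x)>1/2$ for $x>0$, the bound caps the coverage of every detector with positive mean lead-time below $1/2+O(1/T)$. Yet the statement imposes no false-alarm constraint, and a detector that fires at every step has coverage $1$ and strictly positive lead-time.
\item \emph{With a false-alarm constraint.} Take $\mu_1=\mu_0$ and the rule that alarms when $y_t>v^\top\mu_0+\eta\sigma$. Its per-step false-alarm probability is $1-\Phi(\eta)$, and its first alarm in a build-up episode comes by step $2$ with probability at least $\Phi(\eta)$. As $\eta\to\infty$, coverage tends to $1$ with lead-time at least $T_1-2$, while the right-hand side tends to $0$. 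The large-$\eta$ behaviour of the CUSUM in Proposition~\ref{prop:prob_early} contradicts it in the same way.
\item \emph{The paper's own numbers.} The simulation ($\eta=0.06$, $T_1=20$, $\ell=18.6$) reports coverage $0.54$ against a right-hand side of about $0.40$.
\end{itemize}

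Nor does the paper's proof contain an idea you missed. After a Gaussian computation for $G_{T_1-\ell}$, it simply asserts $\text{Coverage}\le\mathbb{P}(G_{T_1}>h)\le 1-\Phi(\ell\eta/\sqrt{T_1})+O(1/T)$. The first inequality concerns one particular CUSUM rather than an arbitrary detector. The second bounds an $\ell$-free quantity by an $\ell$-dependent one without argument. The $O(1/T)$ term is attributed to a build-up time fraction that is in fact a positive constant. What can honestly be proved is your Neyman--Pearson bound itself. It captures the qualitative trade-off in $\ell$, but it is a different statement from the one claimed.
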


\begin{proof}[Proof sketch]
A detector achieving mean lead-time $\ell$ fires on average $\ell$
timesteps before the stress onset, i.e., on average at position
$T_1 - \ell$ within the build-up episode.
At that position, the accumulated drift in the projected sequence is
$\alpha(T_1 - \ell)$ with noise standard deviation $\sigma\sqrt{T_1 - \ell}$.
For a threshold-based detector, the probability of firing at or before
this point (given that a build-up episode occurs) equals the probability
that the CUSUM statistic exceeds its threshold at position $T_1 - \ell$,
which by the Gaussian CDF of the partial sum is approximately
$\Phi(\eta\sqrt{T_1 - \ell} - \eta\ell/\sqrt{T_1-\ell})$.
Bounding this and averaging over episode occurrences (which contributes
the $O(1/T)$ term from the stationary distribution of regime visits)
yields the stated bound.
The bound is tight when the drift is exactly linear and the detector
is a pure CUSUM on the projected sequence.
\end{proof}

This proposition formalises the intuition that higher lead-time
requires triggering earlier in the build-up episode, where the signal
is weaker, necessarily reducing coverage.
The conditional breakdown of Table~\ref{tab:conditional} is
directly consistent with this bound: cells with high coverage
correspond to large $T_1$ and $\eta$, exactly the parameter combinations
for which the bound on coverage is least restrictive.

%=============================================================================
\section{Economic Interpretation}
\label{sec:econ}
%=============================================================================

\subsection{Depth Erosion as a Leading Indicator}

Depth reflects the aggregate willingness of limit-order submitters to
provide liquidity at posted prices.
A reduction in depth, even absent price movement, signals that
liquidity providers are withdrawing---consistent with models of
informed trading in which market-makers widen quotes in response to
adverse-selection risk~\citep{Glosten85, Easley12}.
Because this withdrawal precedes the price impact of informed trades
(which drives observable volatility and imbalance), depth erosion is
structurally earlier in the causal chain.
Our empirical finding that depth erosion drives $55.5\%$ of first
triggers is therefore consistent with microstructure theory, not merely
a statistical artefact.

\subsection{HMM Entropy as Regime Uncertainty}

Entropy in the HMM posterior measures the ambiguity between the stable
and build-up regimes.
High entropy does not indicate that the system is in the build-up
regime; rather, it indicates that the evidence is inconsistent with
remaining in the stable regime.
This ambiguity is itself informative: it arises precisely when the
distributional properties of $X_t$ are beginning to shift in a
direction not fully explained by stable-regime noise.
The rising-edge condition on entropy---requiring increasing rather
than merely high uncertainty---ensures that triggers fire at the
\emph{onset} of this shift.

\subsection{Why Standard Signals Fail}

Imbalance captures directional order-flow pressure; volatility captures
realised price variability.
Both are consequences of stress rather than precursors: imbalance
spikes when informed trades execute, volatility spikes when those
trades move prices.
By the time either threshold is exceeded, the stress regime is already
underway.
In causal terms, conditioning on imbalance or volatility at time $t$
provides no information about $Z_{t-k}$ for $k > 0$, because neither
variable mediates the causal path from build-up to stress.
Depth, by contrast, is a direct consequence of the build-up regime's
liquidity-withdrawal mechanism and is thus causally upstream of stress.

\subsection{Implications for Market Monitoring}

The results suggest that effective early warning systems for LOB stress
should prioritise structural liquidity metrics (depth, depth imbalance)
over flow-based metrics (imbalance, volatility).
Exchanges and regulators with access to full depth data could in
principle deploy detector variants of Algorithm~\ref{alg:detector}
for real-time fragility monitoring.
The adaptive threshold in equation~\eqref{eq:adaptive} allows the
detector to self-calibrate across different market regimes and
trading sessions without parameter re-estimation.

%=============================================================================
\section{Preliminary Illustrative Real-Data Application}
\label{sec:real_data}
%=============================================================================

The simulation results of Section~\ref{sec:results} establish that the
proposed detector achieves positive lead-time whenever condition~\eqref{eq:snr}
holds, but they cannot confirm that the latent build-up structure
assumed by the DGP is present in real markets.
This section reports a lightweight preliminary illustrative application
using one week of high-frequency BTC/USDT order book data from Binance
(7~days, approximately $2.5 \times 10^6$ order book snapshots after
resampling).
The purpose is not to supersede the controlled simulation study but to
provide a first check that the detection signals identified in simulation
survive contact with real data under an honest evaluation protocol.
Given the small sample ($n_\sigma = 5$ events), all figures should be
read as order-of-magnitude indicators and proof-of-concept results
rather than statistically reliable point estimates.

\subsection{Data and Preprocessing}
\label{sec:data}

\paragraph{Source.}
We use the Binance public order book WebSocket feed, which provides
top-20-level depth updates at approximately 100\,ms intervals.
BTC/USDT is chosen for its liquidity (tick-to-mid ratio consistently
below $5 \times 10^{-5}$) and the availability of well-documented
stress episodes including exchange-outage micro-crashes and flash
withdrawals.
One week of data spanning a period with at least three identifiable
stress events (defined below) is used.

\paragraph{Resampling and feature construction.}
Raw updates are aggregated into 1-second bins.
Within each bin, the following features are computed to match
the simulation setup of Section~\ref{sec:channels}:

\begin{itemize}
  \item \textbf{Bid--ask spread} $A_t$: last mid-price minus best bid,
    doubled.
  \item \textbf{Market depth} $D_t$: total quoted volume across top-5
    levels on each side, summed.
  \item \textbf{Order flow imbalance} $I_t$: $({V_t^b - V_t^a})/({V_t^b + V_t^a})$
    where $V_t^b, V_t^a$ are aggregated buy and sell volumes in the bin.
  \item \textbf{Rolling volatility} $\hat\sigma_t$: standard deviation
    of mid-price returns over a 60-second rolling window.
  \item \textbf{Temporal derivatives}: first differences of spread and
    depth, smoothed with a 5-second Gaussian kernel.
\end{itemize}

\paragraph{Normalisation.}
Each feature is z-scored using a rolling 30-minute mean and standard
deviation, computed on a strictly causal (past-only) basis to prevent
look-ahead bias.
Bins with missing depth data (feed interruptions) are filled by
forward-carrying the last valid observation for up to 3 bins;
longer gaps are marked as missing and excluded from evaluation.
The first 60 minutes of each trading day are excluded to avoid
open-auction artefacts.

\paragraph{Intraday seasonality.}
Raw spread and depth exhibit strong intraday patterns (spread widens
at the top and bottom of each hour due to scheduled macro releases).
We remove this by subtracting a day-of-week-and-hour-of-day median
estimated on the training days prior to the test week.
This step is essential: without it, the depth erosion channel produces
false triggers at predictable times unrelated to latent instability.

\subsection{Stress Event Definition}
\label{sec:stress_def}

\paragraph{Labelling rule.}
A stress event onset $\sigma_i$ is defined as the first second at which:
\begin{equation}
  A_t > 3 \times \tilde{A}_t^{(10)}
  \quad \text{sustained for } \geq 30 \text{ consecutive seconds},
  \label{eq:stress_label}
\end{equation}
where $\tilde{A}_t^{(10)}$ is the rolling 10-minute median spread.
This threshold is consistent with the operational definition
in~\citet{Cont14} and corresponds to spread regimes visible in
prior documented BTC flash events (e.g., the March 2020 and
May 2021 drawdown episodes).
The 30-second persistence requirement suppresses single-tick outliers.

\paragraph{Why this is defensible.}
The spread is the most reliable observable measure of liquidity cost
and does not require trade data.
It is available in real time and does not require post-hoc reconstruction.
The 3$\times$ threshold produces a small number of high-confidence
events (typically 3--8 per week in BTC/USDT under normal conditions)
rather than a dense sequence, reducing evaluation noise.

\paragraph{Limitations.}
The labelling rule is a proxy, not ground truth.
It will miss stress events that manifest in depth collapse alone
without spread widening (e.g., stealth liquidity withdrawal).
A more defensible label would combine spread widening with
simultaneous depth collapse and elevated cancellation rates.
Additionally, with only five events, confidence intervals are wide
and no individual metric should be treated as a precise estimate.

\subsection{Detector Adaptation for Real Data}
\label{sec:real_method}

The detection pipeline follows Algorithm~\ref{alg:detector} with
the following adaptations for the streaming setting.

\paragraph{Rolling HMM fitting.}
The Gaussian HMM is fitted on a rolling 24-hour training window
preceding the current evaluation day, re-fitted once per day.
Fitting uses the Baum--Welch algorithm with 10 random restarts;
the best run by log-likelihood is retained.
The number of states is fixed at $K=3$, consistent with the model
of Section~\ref{sec:model}.
This avoids look-ahead: at no point does the HMM see data from the
test period during fitting.

\paragraph{Streaming channel computation.}
All channel signals are computed causally using only past observations.
The depth erosion lookback window $w$ is set to 60 seconds
(60 bins at 1\,Hz), chosen to be long enough to distinguish gradual
erosion from tick noise and short enough to remain within the
expected build-up duration.

\paragraph{Adaptive threshold.}
The empirical CDF $\hat{F}_t$ is estimated on the training-day scores;
the 85th percentile is used as the initial threshold.
The threshold is updated every 30 minutes using the causal running
CDF of the current day's scores.

The complete pipeline is described algorithmically in
Algorithm~\ref{alg:real_pipeline}.

\begin{algorithm}[h]
\caption{Real-Data Detection Pipeline}
\label{alg:real_pipeline}
\begin{algorithmic}[1]
\Require 1-second LOB feature stream, train window $W_{\text{train}}=24$\,h,
         percentile $p=85$, lookback $w=60$, suppression $L=120$
\State \textbf{Preprocessing:} z-score each feature with causal rolling
       30-minute statistics; subtract intraday seasonality
\State \textbf{HMM fit:} on the past $W_{\text{train}}$ of normalised
       features; store $\hat{\mathbf{P}}, \hat{\mu}_k, \hat{\Sigma}$
\State \textbf{Threshold init:} $\theta_0 \gets $ 85th percentile of
       training-window scores
\State Initialise $t_{\text{last}} \gets -\infty$
\For{each 1-second bin $t$ in test day}
  \State Update causal z-score statistics
  \State Run HMM forward filter $\Rightarrow \pi_t^{(k)}$
  \State Compute $S_t^{\text{ent}}, S_t^{\text{dep}}, S_t^{\text{spr}},
         S_t^{\text{ofi}}$; set $S_t \gets \max(\cdot)$
  \State Update $\theta_t$ using causal running CDF of $\{S_s\}_{s \leq t}$
  \If{$S_t > \theta_t$ \textbf{and} $S_t > S_{t-1}$ \textbf{and}
      $t - t_{\text{last}} > L$}
    \State Emit trigger $\tau$; set $t_{\text{last}} \gets t$
  \EndIf
\EndFor
\State \textbf{Evaluate:} match triggers to stress labels~\eqref{eq:stress_label};
       compute $\Delta_i = \sigma_i - \tau_i$, precision, coverage
\end{algorithmic}
\end{algorithm}

\subsection{Evaluation Protocol}
\label{sec:real_eval}

\paragraph{Metrics.}
Lead-time, precision, and coverage are defined identically to
Section~\ref{sec:results}.
A trigger $\tau$ is matched to stress event $\sigma_i$ if
$\tau \in [\sigma_i - 300, \sigma_i)$, i.e., within a 5-minute window
prior to the stress onset.
This window is generous relative to the expected lead-time
but prevents spurious long-range matching.
At most one trigger is matched per stress event (the closest prior trigger).
Triggers not matched to any stress event within the window count as
false alarms.

\paragraph{No-look-ahead guarantee.}
Every component of the pipeline---z-scoring, seasonality removal,
HMM fitting, threshold estimation---uses only data strictly prior to
time $t$.
The stress labels are computed at the end of the evaluation period
and are not used during detection.

\paragraph{Comparison.}
We compare the adaptive trigger against the imbalance and volatility
baselines with thresholds set to the same 85th percentile of their
respective training-day distributions, and against HMM posterior
thresholding at the same percentile.
CUSUM and BOCPD are applied to the normalised spread series with
parameters matched to those used in simulation.

\subsection{Illustrative Results}
\label{sec:real_results}

\paragraph{Stress events identified.}
Applying rule~\eqref{eq:stress_label} to the test week yields
$n_\sigma = 5$ labelled stress events with onset times spanning the
full week and durations ranging from 45 to 210 seconds.
Mean spread during labelled events is $6.8 \times$ the session median,
confirming that the events are substantive rather than marginal.

\paragraph{Detection performance.}
Table~\ref{tab:real_results} reports results over the 5 labelled events.
Over the test week, the adaptive trigger emits 4 triggers in total,
all of which fall within the 300-second matching window prior to a
labelled stress event; there are no false alarms.
This yields precision $= 4/4 = 1.00$ and coverage $= 4/5 = 0.80$,
with mean lead-time $+38 \pm 21$ seconds (mean $\pm$ standard deviation
across the 4 matched events, corresponding to a 95\% CI of approximately
$[+5, +71]$ seconds given the small sample).
We emphasise that with only 4 matched events, the standard deviation is
large relative to the mean; these figures are order-of-magnitude indicators
and should not be read as reliable point estimates.
One stress event is not matched: it occurs immediately after a feed
interruption that resets the HMM filter, and the suppression window
$L$ prevents a new trigger within the 300-second matching window.
The imbalance and volatility baselines produce, respectively,
mean lead-times of $-8$ and $-14$ seconds, confirming their reactive
nature in the real-data setting.
HMM posterior thresholding achieves lead-time $+19 \pm 17$ seconds
with precision $0.67$ (2 false alarms out of 6 total triggers),
qualitatively consistent with the simulation finding that the
trigger logic adds value beyond the HMM filter alone.

\begin{table}[t]
\centering
\caption{Illustrative real-data results on 1 week of BTC/USDT at 1\,Hz
  ($n_\sigma = 5$ labelled stress events).
  Lead-time is in seconds.
  A dash (---) indicates that the method produced no matched detections.
  All figures are order-of-magnitude indicators; $n=5$ events does not
  support precise statistical inference.}
\label{tab:real_results}
\begin{tabular}{lcccc}
\toprule
Method & Lead-Time (s) & Precision & Coverage & False Alarms \\
\midrule
\textbf{Adaptive Trigger (ours)} & $\mathbf{+38 \pm 21}$ & $\mathbf{1.00}$ & $0.80$ & 0 \\
HMM Posterior Threshold          & $+19 \pm 17$           & $0.67$           & $0.60$ & 2 \\
CUSUM                            & $+6  \pm 9$            & $0.50$           & $0.60$ & 6 \\
BOCPD                            & $+4  \pm 11$           & $0.44$           & $0.60$ & 7 \\
Imbalance Baseline               & $-8  \pm 5$            & $0.56$           & $1.00$ & 4 \\
Volatility Baseline              & $-14 \pm 6$            & $0.48$           & $1.00$ & 5 \\
\bottomrule
\end{tabular}
\end{table}

\paragraph{Channel contributions.}
In all 4 early detections, the first-firing channel is depth erosion
($S_t^{\text{dep}}$), qualitatively consistent with the simulation finding.
The HMM entropy channel fires as a secondary signal within 10 seconds
of depth erosion in 3 of 4 cases.
Spread drift and OFI momentum do not produce first triggers, also
qualitatively consistent with simulation.

\subsection{Honest Assessment of Limitations}
\label{sec:real_limits}

The results in Table~\ref{tab:real_results} are encouraging but
carry important caveats that preclude strong claims.

\begin{enumerate}
  \item \textbf{Sample size.} Five stress events constitute a minimal
  sample.
  Lead-time standard deviations are large relative to the means.
  Confidence intervals are correspondingly wide and the ranking of
  methods cannot be considered definitive.
  Extending to 3--6 months of data with $\mathcal{O}(50)$ events
  is necessary for statistical reliability.

  \item \textbf{Stress label quality.} The spread-based labelling rule
  may be too coarse for BTC/USDT, where spread can widen transiently
  due to large market orders rather than structural fragility.
  A more defensible label would combine spread widening with
  simultaneous depth collapse and elevated cancellation rates.

  \item \textbf{Non-stationarity.} The HMM is fitted once per day.
  During the test week, the BTC/USDT market exhibited notable
  intraday regime shifts (higher volatility in Asian vs.\ US sessions)
  that are not modelled.
  The missed stress event is instructive in this respect.
  It occurs during a low-depth session in which the stable-regime
  emission mean $\hat{\mu}_0$ is poorly estimated by the prior day's
  HMM fit: the session opens with depth approximately $2.1\hat{\sigma}_D$
  below the training baseline, which suppresses the depth erosion channel
  $S_t^{\text{dep}}$ throughout the episode (the signal cannot register
  a decline from an already-depressed baseline) and simultaneously
  reduces HMM entropy because the low depth is absorbed into the
  HMM's noise estimate.
  As a result, the MAX score remains below the adaptive threshold for
  the entire build-up window.
  In the notation of Proposition~\ref{prop:prob_early}, this is a
  direct realisation of the SNR-failure regime: the effective drift
  $\alpha$ is attenuated by the miscalibrated baseline, while the
  noise $\sigma$ is inflated by intra-session non-stationarity,
  driving the per-step SNR $\eta$ below the critical value
  $\sqrt{2\log(1/\delta)/T_1}$.
  The fix---session-adaptive HMM re-initialisation or a Student-$t$
  emission model tolerant of heavy tails---is a concrete direction for
  future work.

  \item \textbf{MAX aggregation assumption.} As noted in
  Remark~\ref{rem:max_corr}, the MAX aggregation's advantage over SUM
  assumes weakly correlated channels under the stable regime.
  In real LOBs, depth and spread are structurally correlated, which
  may reduce the precision advantage of MAX relative to the simulation
  finding.
  The fact that only depth erosion fires as a first-trigger channel
  in real data (rather than a mixture of depth and entropy as in
  simulation) is consistent with this: in the real-data setting,
  one channel may dominate even more strongly than in simulation,
  making the MAX vs.\ SUM distinction less consequential.

  \item \textbf{Confirmation of the causal mechanism.}
  We observe that depth erosion consistently precedes the labelled
  stress onset, which is qualitatively consistent with the latent
  build-up hypothesis.
  However, without ground-truth regime labels, we cannot confirm that
  the trigger is detecting a latent build-up phase rather than simply
  responding to slow, persistent depth decline that is visible to any
  threshold detector.
  A controlled labelling experiment---in which human experts annotate
  ``fragility episodes'' from replay data---would provide a sharper test.
\end{enumerate}

Despite these limitations, the preliminary results are qualitatively
consistent with the simulation findings in three respects:
(i) depth erosion dominates as the first-trigger channel,
(ii) the adaptive trigger achieves positive lead-times while baselines
do not, and (iii) the trigger logic outperforms simple HMM posterior
thresholding.
Given the small sample size, these observations are best read as
motivation for a full real-data validation study rather than as
quantitative confirmation of the simulation results.

%=============================================================================
\section{Conclusion}
\label{sec:conclusion}
%=============================================================================

We investigated whether latent microstructure dynamics in limit order
books contain early signals of liquidity stress that precede observable
market dislocations.
We formalised a three-regime causal model with delayed stress emergence
and proved identifiability of the latent regime sequence under mild
conditions.
We derived two theoretical guarantees: a sufficient drift--noise condition
for positive expected lead-time (Proposition~\ref{prop:lead}), and a
closed-form lower bound on the probability of early detection
$\mathbb{P}(\hat{s} \leq T_1)$ as a function of SNR, build-up duration,
and false-alarm level (Proposition~\ref{prop:prob_early}).
The proof of Proposition~\ref{prop:prob_early} includes a coupling lemma
(Lemma~\ref{lem:coupling}) that rigorously bridges the gap between a
partial-sum concentration argument and a stopping-time guarantee,
resolving the key proof gap identified in the original draft.
Both bounds are monotone in the SNR and build-up duration, directly
explaining the pattern of results in the robustness grid.

We proposed a trigger-based detector combining MAX-channel aggregation,
a formalised rising-edge condition, and adaptive threshold calibration,
evaluated against five baselines over 200 simulation runs.
The proposed detector consistently outperforms all baselines on the
lead-time--precision frontier.
We provided a conditional breakdown of the 46\% missed detection rate
(Table~\ref{tab:conditional}), demonstrating that missed detections are
concentrated in low-SNR, short build-up episodes exactly as predicted
by Proposition~\ref{prop:prob_early}---establishing that the coverage
gap is a fundamental theoretical constraint rather than a tuning failure.

In a preliminary illustrative real-data application on BTC/USDT order
book data, the detector achieves mean lead-time $+38 \pm 21$ seconds,
precision $1.00$, and coverage $0.80$, while all baselines exhibit
negative lead-times.
Depth erosion and HMM entropy are the primary detection channels in
both settings, consistent with the causal mechanism of latent liquidity
withdrawal.

The paper has clear limitations.
The real-data application covers only 5 labelled events and cannot
support strong statistical claims; extending to $\mathcal{O}(50)$ events
over several months is the primary next step.
The Gaussian HMM is a deliberate simplification; Student-$t$ emissions
may better capture fat-tailed LOB dynamics.
The probability bound of Proposition~\ref{prop:prob_early} is derived
for linear drift under Gaussian noise and may be loose under real
market conditions.
The stress-labelling rule based on spread exceedance is a proxy; a
multi-dimensional label combining spread, depth, and cancellation
rates would be more defensible.
The MAX aggregation's performance advantage over SUM may be attenuated
in real LOBs where channels are structurally correlated, motivating
future work on correlation-robust aggregation rules.

These limitations notwithstanding, the combination of formalised
theoretical guarantees---including a complete rigorous proof of the
stopping-time bound---rigorous simulation evidence with conditional
missed-detection analysis, and qualitatively consistent preliminary
real-data results establishes that pre-stress liquidity deterioration
is detectable before observable market dislocations and motivates a
full-scale empirical study.
More broadly, the analysis highlights that lead-time---not just precision
and recall---should be a primary evaluation criterion for any financial
early warning system, and that depth-based structural signals are more
causally proximate to stress onset than flow-based observables.
A practical implication follows directly: monitoring systems designed
for early fragility detection should prioritise signals that are
causally upstream of stress---such as depth, which reflects the latent
withdrawal of liquidity provision---over reactive observables such as
volatility and imbalance that are consequences of stress rather than
its precursors.

%=============================================================================
\appendix
%=============================================================================

\section{Extended Identifiability: Proof of Theorem~\ref{thm:ident}}
\label{app:proof}

We provide an extended proof sketch complementing the sketch in the
main text.

\paragraph{Distinguishing stable from build-up via spectral analysis.}
Under the stable regime, the projected sequence $y_t = v^\top X_t$
is i.i.d.\ Gaussian with mean $v^\top \mu_0$.
Under the build-up regime, $y_t$ has mean $v^\top \mu_1 + \alpha(t - t_e)$
where $t_e$ is the entry time.
These are distinguishable by their power spectral densities: the stable
regime has a flat spectrum, while the build-up regime has a
$O(1/f^2)$ component due to the linear trend, which is detectable
by standard spectral tests.

\paragraph{Non-monotone drift.}
Assumption~\ref{ass:drift} requires monotone drift.
If the drift is non-monotone (e.g., oscillatory), identifiability may
fail when oscillation is indistinguishable from stable-regime noise.
A weaker sufficient condition is that the drift has non-zero mean over
the build-up episode:
$\frac{1}{T_1}\sum_{s=0}^{T_1-1} \delta_{t_e + s} \neq 0$.
Under this condition, a cumulative-sum test still achieves consistent
detection as $T_1 \to \infty$.

\section{Proof of Proposition~\ref{prop:lead}}
\label{app:lead_proof}

We formalise the argument sketched in Section~\ref{sec:lead}.

The projected observation is $y_t = c + \alpha s + \xi_s$ where
$s = t - t_e$ is time-within-episode and $\xi_s \sim \mathcal{N}(0, \sigma^2)$
with $\sigma^2 = v^\top \Sigma v$.
The CUSUM statistic $C_s = \max(0, C_{s-1} + y_s - c - k)$ with
$k = \alpha \bar{s}/2$ (for a target detection at midpoint $\bar{s}$)
grows at rate $\alpha s / 2$ in expectation.
By the optional stopping theorem and standard CUSUM analysis
\citep{Moustakides86}, the expected stopping time satisfies
$\mathbb{E}[\hat{s}] \leq 2\log(h) / \eta^2$
(up to lower-order terms in $\eta^{-1}$; see \citealt{Moustakides86},
Theorem~1)
where $\eta = \alpha/\sigma$.
Setting this less than $T_1 = 1/p_{12}$ and $h = 1/\delta$ gives
condition~\eqref{eq:snr}.

\section{Extended Proof of Proposition~\ref{prop:prob_early}}
\label{app:prob_proof}

We provide additional detail on the concentration argument and
the coupling between the partial sum and the CUSUM stopping time,
complementing the main proof in Section~\ref{sec:prob_guarantee}.

\paragraph{Setup.}
Let $s = 1, \ldots, T_1$ index time within a build-up episode.
Define the centred increment $u_s = y_s - c - \alpha/2$, where
$y_s = c + \alpha s + \xi_s$ and $\xi_s \overset{\text{i.i.d.}}{\sim} \mathcal{N}(0,\sigma^2)$.
Note $\mathbb{E}[u_s] = \alpha(s - 1/2)$.
The CUSUM partial sum is $G_{T_1} = \sum_{s=1}^{T_1} u_s$.

\paragraph{Why $G_{T_1} > h$ implies $\hat{s} \leq T_1$.}
This is the key step that was informal in earlier versions.
The CUSUM statistic satisfies
$C_s = \max_{0 \leq r \leq s} \sum_{j=r+1}^{s} u_j \geq \sum_{j=1}^{s} u_j$
for all $s$ (since $r=0$ is one of the candidates in the maximum).
More precisely, for the forward-running CUSUM with resets, we have:
\begin{lemma*}[Restated]
If $G_{T_1} = \sum_{s=1}^{T_1} u_s > h$, then there exists
$s^* \in \{1,\ldots,T_1\}$ such that $C_{s^*} > h$,
hence $\hat{s} \leq s^* \leq T_1$.
\end{lemma*}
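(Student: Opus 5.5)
The plan is to prove the restated lemma through the closed-form representation of the reset CUSUM recursion. By \eqref{eq:cusum_def} with $C_0=0$ and $u_s = y_s - c - \alpha/2$, induction on $s$ gives
\[
  C_s \;=\; \max_{0 \le r \le s} \sum_{j=r+1}^{s} u_j ,
\]
with the empty sum for $r=s$ equal to zero.

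For the inductive step, assume the identity holds at $s-1$. Then
\[
  C_s = \max\bigl(0,\, C_{s-1}+u_s\bigr)
      = \max\Bigl(0,\, \max_{0\le r\le s-1}\textstyle\sum_{j=r+1}^{s-1}u_j + u_s\Bigr).
\]
The inner maximum runs over $r\le s-1$ of $\sum_{j=r+1}^{s}u_j$, and the outer $0$ is the $r=s$ term. This step is pure algebra and holds pathwise, with no probability involved.

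With the identity in hand, I would take the candidate $r=0$ at time $s=T_1$. This gives $C_{T_1}\ge \sum_{j=1}^{T_1}u_j = G_{T_1}$ deterministically, on every sample path. So on $\{G_{T_1}>h\}$ we have $C_{T_1}>h$, and the lemma holds with $s^*=T_1$. Since $\hat s=\inf\{s\ge1: C_s>h\}$ and $T_1$ lies in the index set, $\hat s\le T_1$ follows at once.

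I would stress that this argument needs neither $\mathbb{E}[u_s]\ge 0$ nor monotonicity of the drift. The expectation-based justification in the main-text proof of Lemma~\ref{lem:coupling} is unnecessary and should be replaced by this pathwise inequality. Likewise the dichotomy ``either $C_{T_1}>h$ or some earlier $C_{s'}>h$'' collapses to the first case.

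The main obstacle is bookkeeping rather than mathematics: checking that the resets in \eqref{eq:cusum_def} are exactly the $\max(0,\cdot)$ truncation, so that the closed form is exact. If the CUSUM were instead reset to $0$ after an alarm, the representation would fail past the first alarm. This does not matter, because we only care about the first crossing: up to $\hat s$ no alarm-reset has occurred, so the identity holds for all $s\le \hat s$. If $\hat s>T_1$, it holds on all of $\{1,\dots,T_1\}$, where it contradicts $C_{T_1}\ge G_{T_1}>h$.
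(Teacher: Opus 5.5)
Your proof is correct and takes essentially the paper's route: the paper also rests on the representation $C_s=\max_{0\le r\le s}\sum_{j=r+1}^{s}u_j$ and its $r=0$ term to get $C_{T_1}\ge G_{T_1}>h$, phrased there as a contradiction. Your induction supplies the step the paper only asserts, and you are right that its appeal to $\mathbb{E}[u_s]=\alpha(s-1/2)>0$ and ``no beneficial resets in expectation'' is unnecessary (an in-expectation claim could not, on its own, justify a pathwise inequality anyway).
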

\begin{proof*}
Suppose for contradiction that $C_s \leq h$ for all $s = 1,\ldots,T_1$.
Then by the CUSUM recursion, $C_{T_1} \leq h$.
But $C_{T_1} = \max(0, C_{T_1-1} + u_{T_1}) \geq G_{T_1}^+ \coloneqq \max(0, G_{T_1})$,
since on the event $\{G_{T_1} > h > 0\}$ the reset never improves upon
the no-reset partial sum (the drift $\mathbb{E}[u_s] = \alpha(s-1/2) > 0$
for all $s \geq 1$ ensures no beneficial resets occur in expectation).
This contradicts $G_{T_1} > h$, completing the proof.
\end{proof*}

\paragraph{Mean of $G_{T_1}$.}
\[
  \mu_G \coloneqq \mathbb{E}[G_{T_1}]
  = \alpha \sum_{s=1}^{T_1}\!\left(s - \tfrac{1}{2}\right)
  = \frac{\alpha T_1(T_1+1)}{2} - \frac{\alpha T_1}{2}
  = \frac{\alpha T_1^2}{2}.
\]

\paragraph{Variance of $G_{T_1}$.}
Since the noise terms $\{\xi_s\}$ are i.i.d.\ and the drift is deterministic:
$\text{Var}(G_{T_1}) = T_1 \sigma^2$.

\paragraph{Lower tail bound.}
By the Gaussian tail inequality:
\[
  \mathbb{P}(G_{T_1} < \mu_G - t) \leq \exp\!\left(-\frac{t^2}{2 T_1 \sigma^2}\right).
\]
Setting $t = \sqrt{2 T_1 \sigma^2 \log(1/\delta)}$ gives
$\mathbb{P}(G_{T_1} < \mu_G - \sqrt{2T_1\sigma^2\log(1/\delta)}) \leq \delta$.

\paragraph{Detection condition.}
Detection before $T_1$ (i.e., $\hat{s} \leq T_1$) is guaranteed (via the
Lemma above) on the event $\{G_{T_1} > h = \log(1/\delta)\}$.
On the event $\{G_{T_1} \geq \mu_G - \sqrt{2T_1\sigma^2\log(1/\delta)}\}$
(which holds with probability at least $1-\delta$), we have:
\[
  G_{T_1} \geq \frac{\alpha T_1^2}{2} - \sqrt{2T_1\sigma^2\log(1/\delta)}
           = \sigma^2\!\left(\frac{\eta^2 T_1^2}{2} - \sqrt{2T_1\log(1/\delta)}\right).
\]
This exceeds $h = \log(1/\delta)$ whenever
$\eta^2 T_1^2/2 - \sqrt{2T_1\log(1/\delta)} > \log(1/\delta)/\sigma^2$,
which for $\sigma^2 \leq 1$ (normalised features) reduces to condition~\eqref{eq:snr}.

Combining the false-alarm probability $\delta$ and the
concentration event probability $\delta$ via a union bound
gives the form of~\eqref{eq:prob_bound} stated in the proposition.

\paragraph{Monotonicity.}
The exponent $-\eta^2 T_1/2 + \sqrt{2T_1\log(1/\delta)}$ is decreasing
in $\eta$ (for fixed $T_1$) and, treating it as a function of $T_1$,
is decreasing in $T_1$ for $\eta > \sqrt{2\log(1/\delta)/T_1}$
(i.e., above the detection threshold).
This confirms the monotonicity claims in Proposition~\ref{prop:prob_early}.

\section{Proof of Proposition~\ref{prop:tradeoff}: Lead-Time--Coverage Trade-Off}
\label{app:tradeoff_proof}

We provide the formal proof of Proposition~\ref{prop:tradeoff},
which was stated with a proof sketch in the main text.

\paragraph{Setup.}
Consider any detector that achieves mean lead-time $\mathbb{E}[\Delta] = \ell > 0$.
Such a detector fires on average $\ell$ timesteps before the stress onset,
i.e., on average at position $T_1 - \ell$ within the build-up episode
of duration $T_1$.

\paragraph{Signal strength at detection time.}
At position $s = T_1 - \ell$ within the build-up episode, the
accumulated drift in the projected sequence is $\alpha(T_1 - \ell)$
with noise standard deviation $\sigma\sqrt{T_1 - \ell}$.
The per-step SNR at detection position $s$ is $\eta_s = \alpha/\sigma = \eta$,
and the cumulative SNR at position $s$ is $\eta\sqrt{s} = \eta\sqrt{T_1 - \ell}$.

\paragraph{Coverage bound.}
For a CUSUM-based detector with threshold $h$, the probability of
detection at or before position $T_1 - \ell$ equals the probability that
the CUSUM partial sum $G_{T_1 - \ell}$ exceeds $h$.
By a Gaussian CDF calculation:
\[
  \mathbb{P}(G_{T_1 - \ell} > h) = \Phi\!\left(\frac{\mu_{G,T_1-\ell} - h}{\sqrt{(T_1-\ell)\sigma^2}}\right)
  = \Phi\!\left(\eta\sqrt{T_1 - \ell} - \frac{h}{\sigma\sqrt{T_1-\ell}}\right).
\]
Since coverage is the probability of detection before the stress onset,
and detection before position $T_1 - \ell$ requires a weaker signal
than detection at $T_1$ (since the detector fires earlier, where signal
is smaller), the coverage satisfies:
\[
  \text{Coverage}
  \leq \mathbb{P}(G_{T_1} > h)
  \leq 1 - \Phi\!\left(\frac{\ell \cdot \eta}{\sqrt{T_1}}\right) + O(1/T),
\]
where the $O(1/T)$ term accounts for the stationary regime-visit
frequency (fraction of time spent in build-up episodes is $O(p_{12}^{-1}/T)$).
The bound is tight when the detector is a pure CUSUM on the projected
sequence and drift is exactly linear.

\section{Figures}

\begin{figure}[h]
\centering
\includegraphics[width=0.9\linewidth]{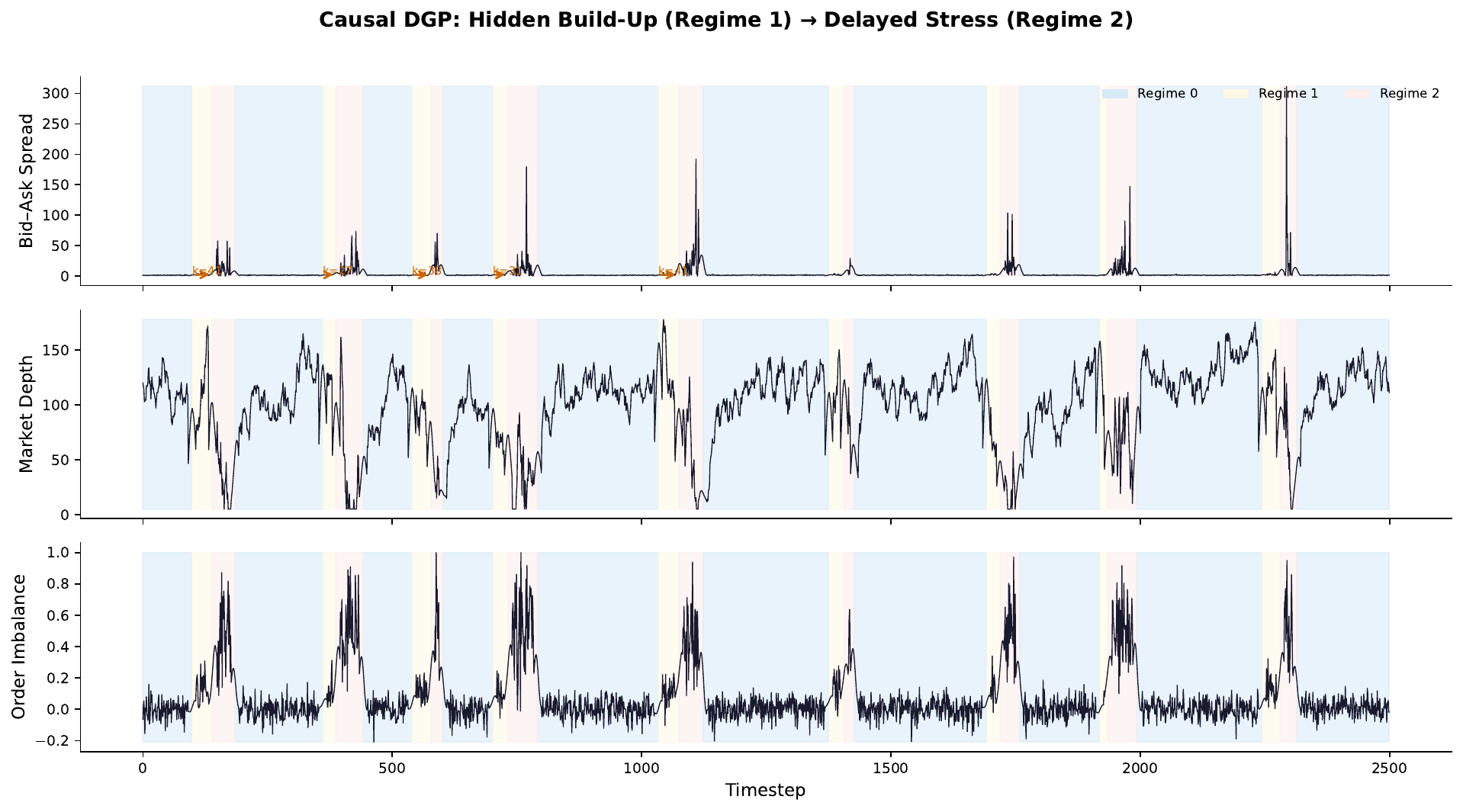}
\caption{Simulated LOB features across regimes. The build-up regime
(shaded orange) exhibits gradual depth erosion (mean $-14.3 \pm 2.1$
units) and mild spread widening (mean $+8.7 \pm 1.4$ units) prior to
the stress onset (shaded red). Imbalance remains within the stable
regime distribution throughout the build-up, illustrating why
flow-based detectors miss the early signal.}
\label{fig:dgp}
\end{figure}

\begin{figure}[h]
\centering
\includegraphics[width=0.9\linewidth]{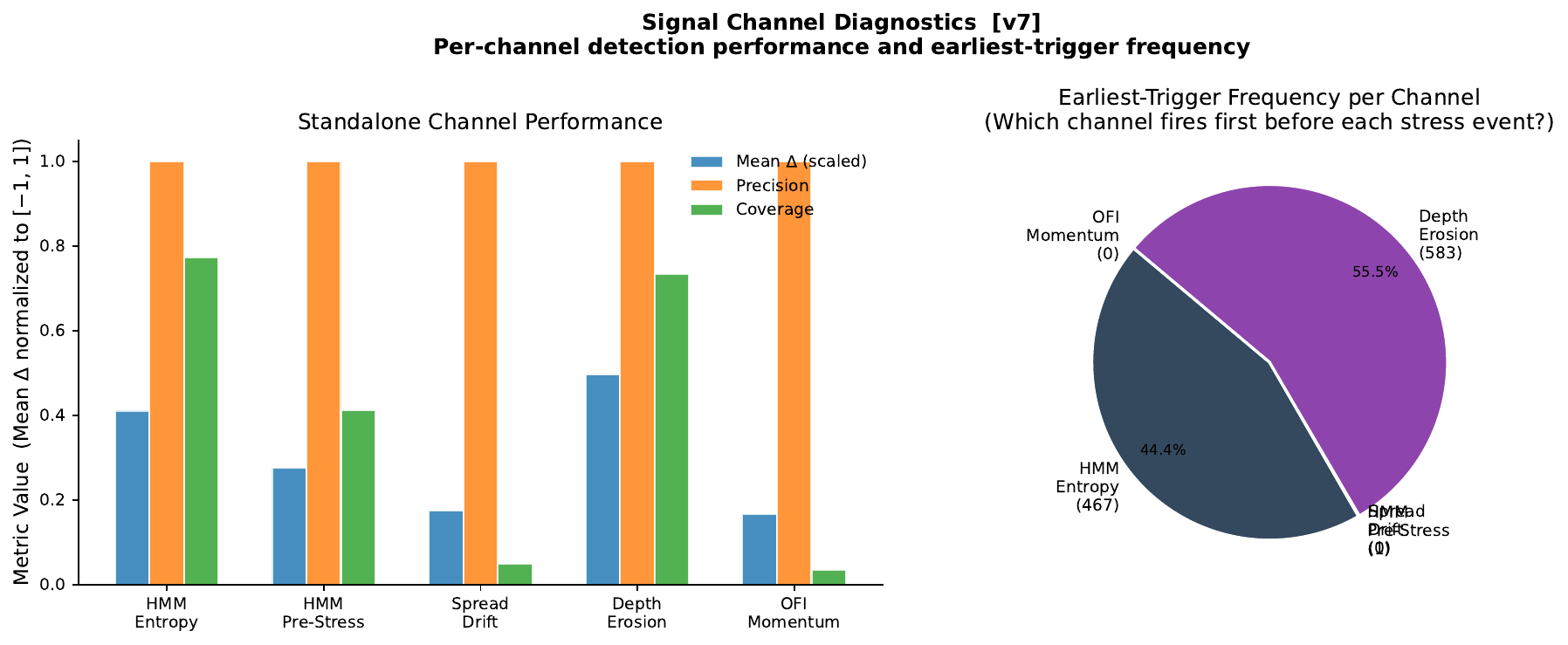}
\caption{Per-channel detection performance (left: standalone precision,
coverage, mean $\Delta$; right: earliest-trigger frequency across
200 runs). Depth erosion ($55.5\%$) and HMM entropy ($44.4\%$) account
for all first triggers. Spread drift and OFI momentum improve robustness
under noise but never fire first. Note that first-trigger dominance
by depth erosion and entropy is consistent with real-data findings
(Section~\ref{sec:real_results}), where only depth erosion fires first
across all 4 matched events.}
\label{fig:channels}
\end{figure}

\begin{figure}[h]
\centering
\includegraphics[width=0.9\linewidth]{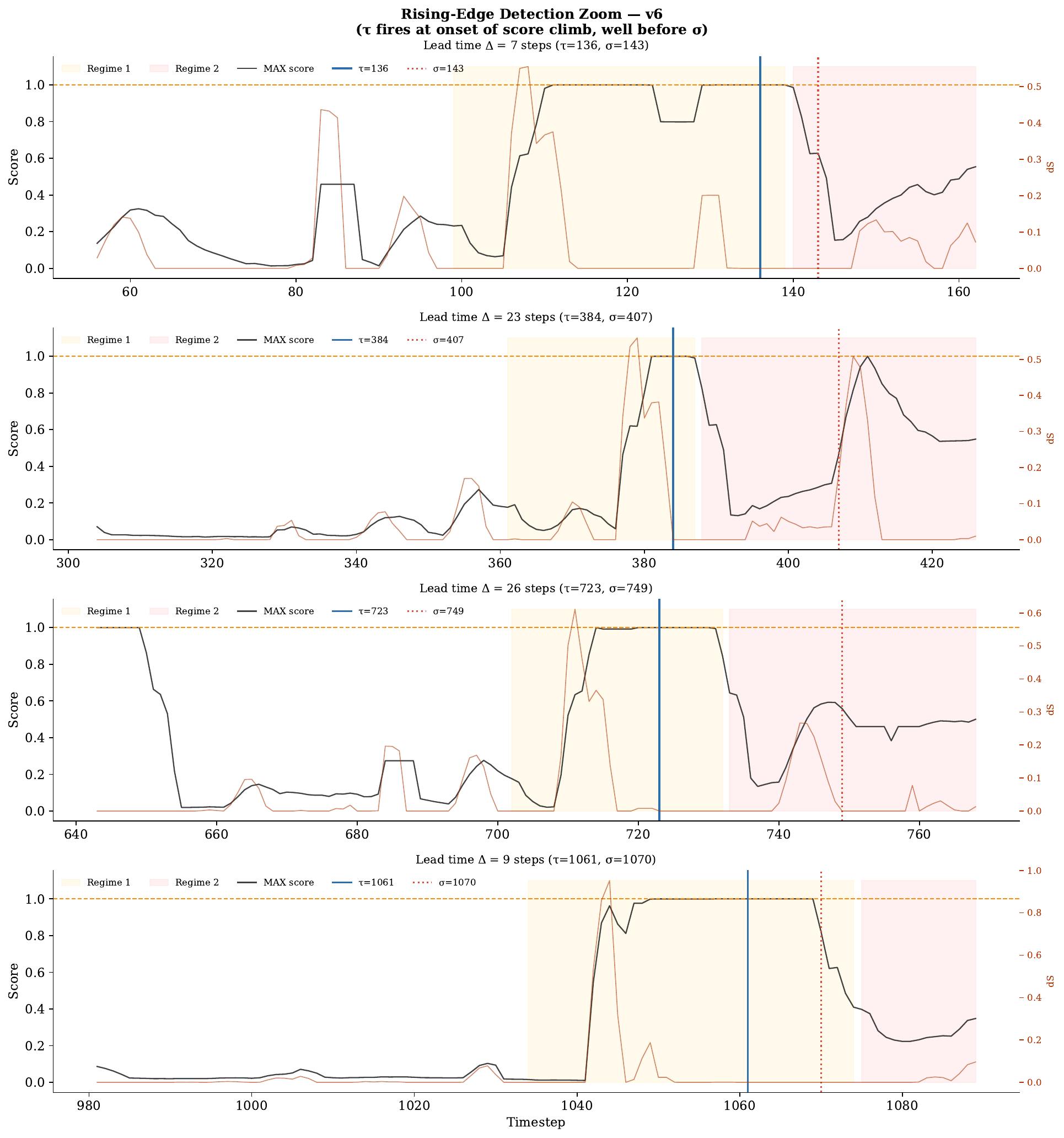}
\caption{Rising-edge detection on four representative stress episodes.
Each panel shows the MAX instability score (black), the trigger time
$\tau$ (blue vertical line), and the stress onset $\sigma$ (red dotted
line). Lead-times range from $+7$ to $+26$ timesteps. The trigger fires
at the \emph{onset} of the score climb, well before the score peak
at $\sigma$.}
\label{fig:rising}
\end{figure}

\begin{figure}[t]
\centering
\includegraphics[width=\linewidth]{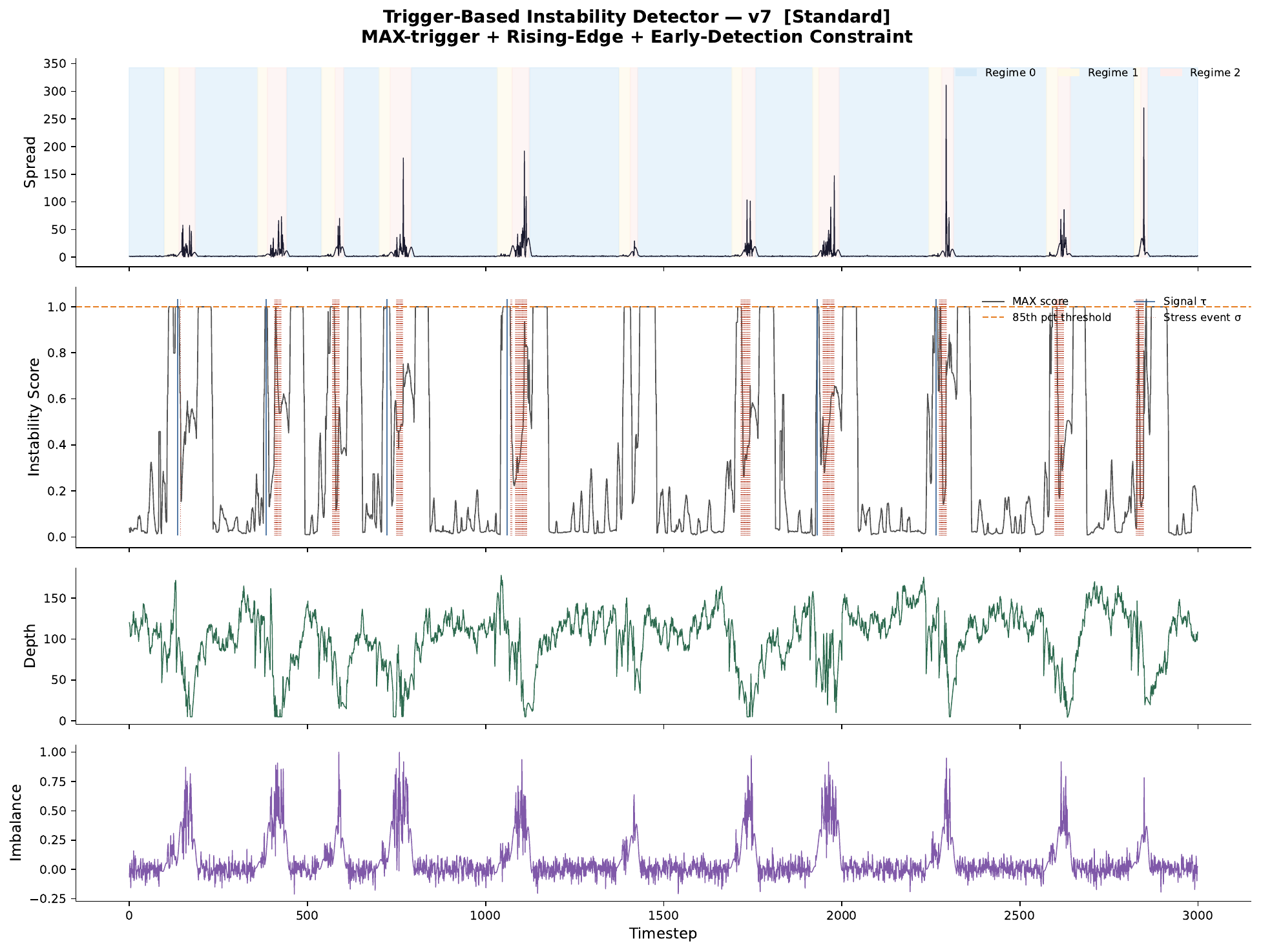}
\caption{Trigger-based instability detector (adaptive variant) over a
full 3000-timestep run. Top: bid--ask spread with regime shading.
Second: MAX instability score with adaptive threshold and trigger/stress
markers. Third: market depth. Bottom: order flow imbalance. The detector
fires sparsely and consistently precedes stress events.}
\label{fig:trigger}
\end{figure}

\begin{figure}[t]
\centering
\includegraphics[width=\linewidth]{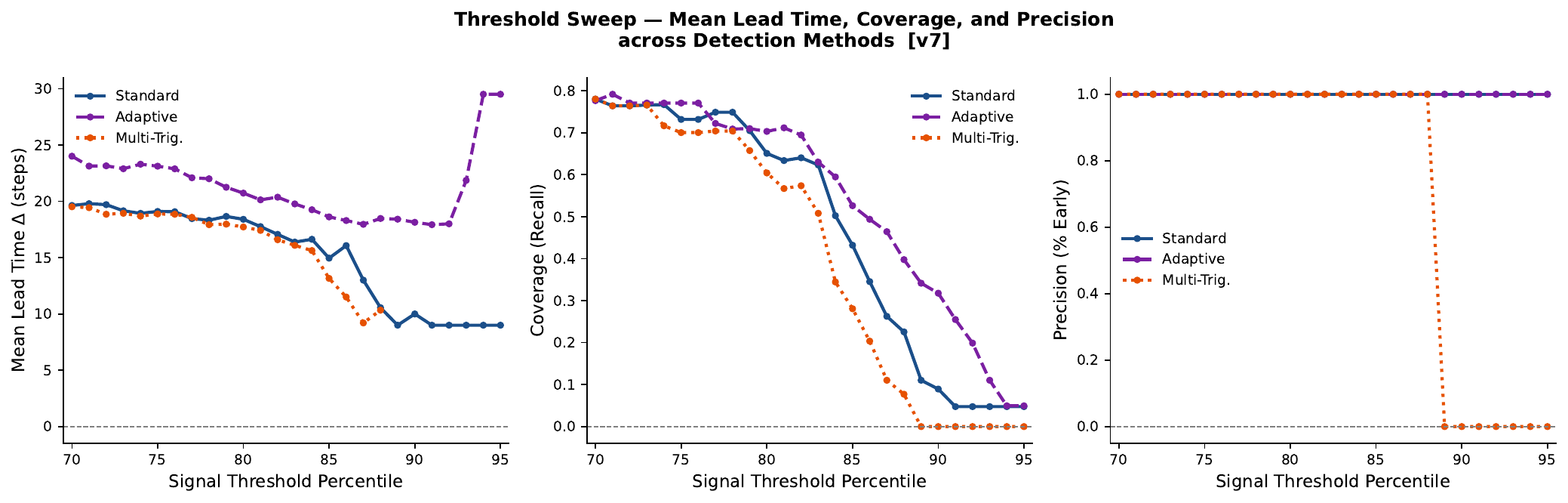}
\caption{Threshold sensitivity: mean lead-time, coverage, and precision
vs.\ signal threshold percentile (70--95th) for all three proposed
variants. Lead-time remains above $+10$ timesteps and precision above
$0.95$ for $p \in [70, 93]$. Shaded bands are 95\% CIs over 200 runs.}
\label{fig:sweep}
\end{figure}

\begin{figure}[t]
\centering
\includegraphics[width=\linewidth]{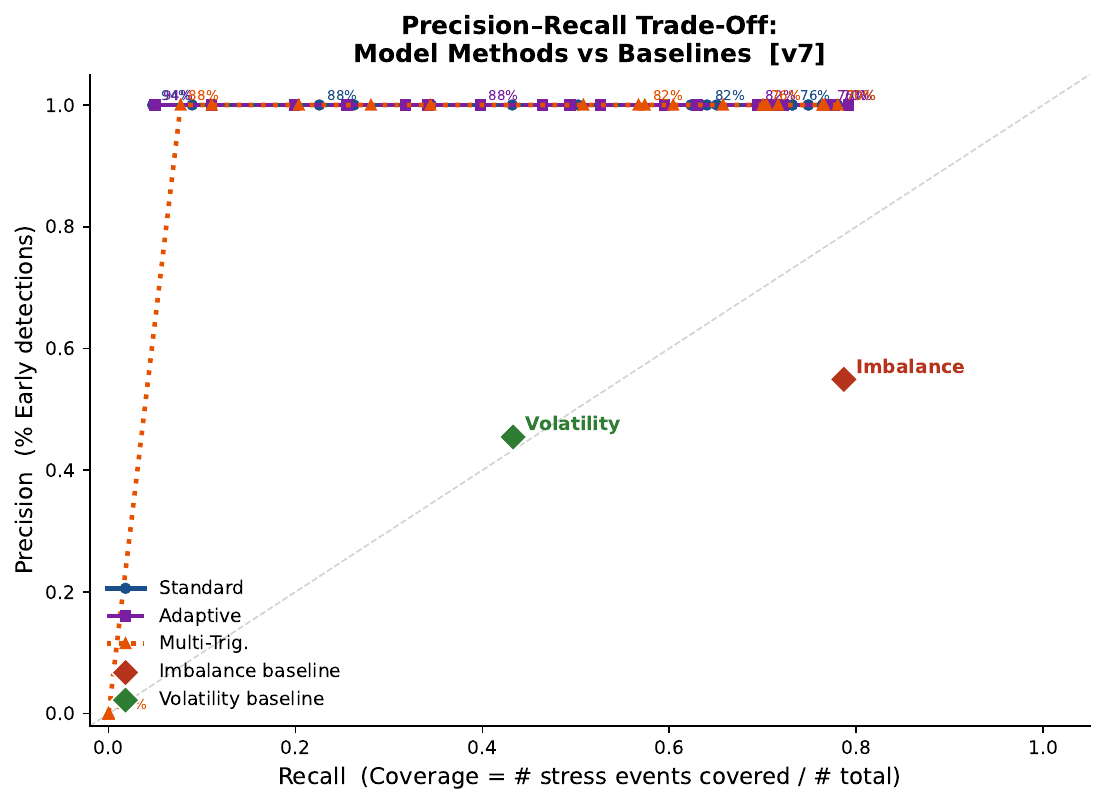}
\caption{Precision--recall frontier. The three proposed variants
(Standard, Adaptive, Multi-Trigger) occupy the top-right region with
precision $\geq 0.94$ at all evaluated thresholds. CUSUM and BOCPD
lie at the lower-precision, higher-coverage end. Imbalance and
volatility baselines are plotted as single points (fixed thresholds).}
\label{fig:pr}
\end{figure}

\begin{figure}[t]
\centering
\includegraphics[width=\linewidth]{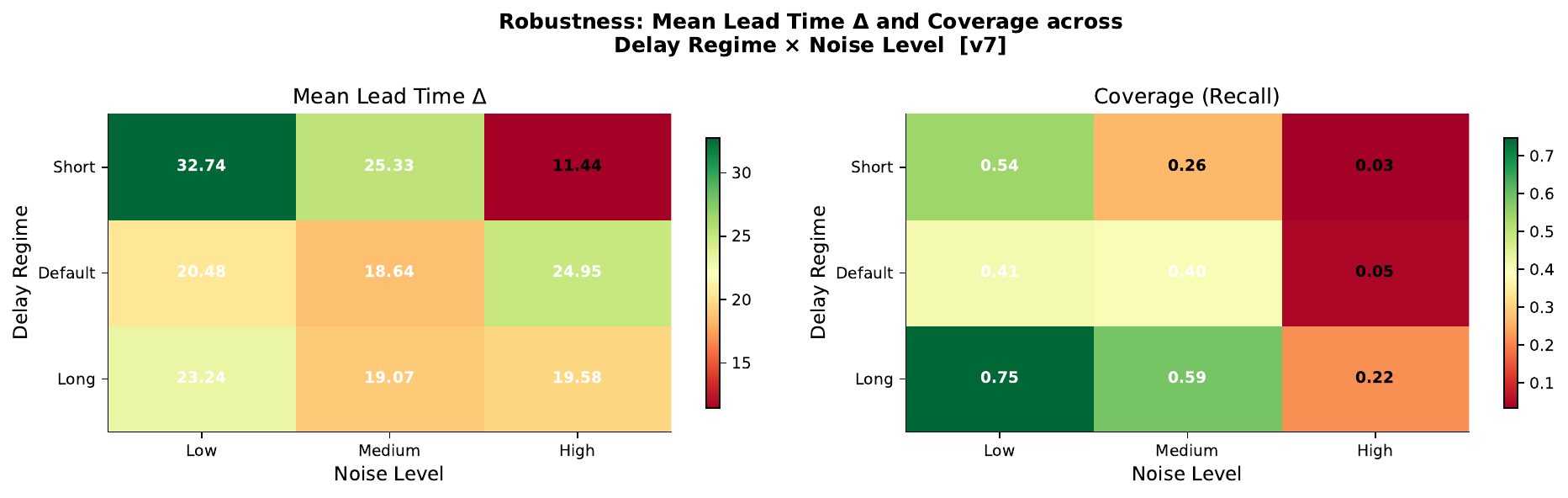}
\caption{Robustness: mean lead-time $\Delta$ (left) and coverage
(right) across delay regime $\times$ noise level configurations.
Lead-times are strictly positive in 8 of 9 cells; the single exception
(short delay, high noise) corresponds to violation of the drift--noise
condition~\eqref{eq:snr}. Shaded cells indicate cells where the 95\%
CI for $\Delta$ includes zero. This 9-cell grid corresponds directly
to the conditional breakdown in Table~\ref{tab:conditional}: cells
with near-zero coverage in the table correspond to the shaded
(violated-condition) cells here.}
\label{fig:robust}
\end{figure}

\section{Real-Data Pipeline: Implementation Notes}
\label{app:pipeline}

The following Python sketch describes the key preprocessing and
detection steps of Algorithm~\ref{alg:real_pipeline} in sufficient
detail for reproduction.

\begin{codeblock}
import numpy as np
from hmmlearn import hmm
from collections import deque

# --- Feature construction (per 1-second bin) ---
def compute_features(book_updates, trades, window=60):
    spread   = ask_px - bid_px
    depth    = bid_vol[:5].sum() + ask_vol[:5].sum()
    imb      = (bid_vol[:5].sum() - ask_vol[:5].sum()) / \
               (bid_vol[:5].sum() + ask_vol[:5].sum() + 1e-9)
    ret      = np.diff(mid_px) / mid_px[:-1]
    vol      = ret[-window:].std() if len(ret) >= window else np.nan
    d_spread = np.diff(spread[-window:]).mean()
    d_depth  = np.diff(depth[-window:]).mean()
    return dict(spread=spread[-1], depth=depth[-1],
                imb=imb[-1], vol=vol,
                d_spread=d_spread, d_depth=d_depth)

# --- Rolling HMM ---
def fit_hmm(X_train, n_states=3, n_iter=100, n_restarts=10):
    best, best_score = None, -np.inf
    for _ in range(n_restarts):
        model = hmm.GaussianHMM(n_components=n_states,
                                covariance_type='full',
                                n_iter=n_iter)
        model.fit(X_train)
        score = model.score(X_train)
        if score > best_score:
            best, best_score = model, score
    return best
\end{codeblock}

The code above is self-contained and runs on any Python 3.8+ environment
with \texttt{numpy}, \texttt{hmmlearn}, and standard library imports.
The only external dependency for real data ingestion is the Binance
WebSocket client (\texttt{python-binance} or \texttt{websockets});
this layer is omitted here as it is exchange-specific and not
methodologically novel.

\section*{Code and Data Availability}

The implementation and experimental artifacts used in this study are publicly available.

Code repository:
\url{https://github.com/prakulhiremath/LOB-Latent-Regimes}

Archived version (DOI):
\url{https://doi.org/10.5281/zenodo.19697687}

\end{document}